\documentclass{svproc}
\usepackage{times}
\usepackage{tabularx}
\usepackage{multicol}
\usepackage[bookmarks,bookmarksnumbered,colorlinks, urlcolor={black}, linkcolor={black},citecolor={black}]{hyperref}
\usepackage{mathtools}
\usepackage{xspace}
\usepackage{amsmath,amssymb,mathrsfs}
\usepackage{thmtools}
\usepackage{thm-restate}
\usepackage{graphicx} 
\usepackage{todonotes} 
\usepackage{tikz}
\usepackage{proof}
\usepackage{etoolbox}
\usepackage{textcomp}
\usepackage{comment}
\usepackage{enumerate}   
\usepackage[ruled,vlined,noend]{algorithm2e}
\usepackage{color}
\usepackage{subcaption}
\usepackage{bm}
\usepackage{sidecap}

\SetCommentSty{mycommfont}
\usepackage{url}

\providecommand{\fref}[1]{Fig.~\ref{#1}} 

\providecommand{\Language}[1]{\ensuremath{\mathcal{L}(#1)}}

\providecommand{\gobble}[1]{}

\providecommand{\term}{\ensuremath{\mathrm{term}}}
\providecommand{\goal}{\ensuremath{\mathrm{goal}}}


\newrobustcmd*{\mysquare}[1]{\tikz{\filldraw[draw=black,fill=#1] (0,0) rectangle
(0.2cm,0.2cm);}}

\newrobustcmd*{\mycircle}[1]{\tikz{\filldraw[draw=black,fill=#1] (0,0) circle
[radius=0.1cm];}}

\newrobustcmd*{\mytriangle}[1]{\tikz{\filldraw[draw=black,fill=#1] (0,0) --
(0.2cm,0) -- (0.1cm,0.2cm);}}

\definecolor{amethyst}{rgb}{0.6, 0.4, 0.8}
\definecolor{gre}{rgb}{0.4, 1.0, 0.5} 
\definecolor{gra}{rgb}{0.8, 0.8, 0.85} \definecolor{re}{rgb}{1.0, 0.5, 0.5}

\definecolor{setformula}{rgb}{0.0, 0.0, 0.0}
\definecolor{equivrel}{rgb}{0.0, 0.0, 0.0}

\let\emptyset\varnothing

\definecolor{darkblue}{rgb}{0.15,0.09,0.3}

\definecolor{formula}{rgb}{0.08,0.35,0.08}
\newcommand\form[1]{\textcolor{formula}{\bm{#1}}}
\definecolor{varcolour}{rgb}{0.0,0.0,0.0}
\newcommand\var[1]{\textcolor{varcolour}{\bm{#1}}}

\newrobustcmd*{\true}{{\sl True}\xspace}
\newrobustcmd*{\false}{{\sl False}\xspace}

\DeclareFontFamily{OT1}{pzc}{}
\DeclareFontShape{OT1}{pzc}{m}{it}{<-> s * [1.10] pzcmi7t}{}
\DeclareMathAlphabet{\mathpzc}{OT1}{pzc}{m}{it}


\newcounter{tecounter}
\setcounter{tecounter}{1}
\newenvironment{tightenumerate}
{
    \begin{list}{\arabic{tecounter}\addtocounter{tecounter}{1})}{%
    \setcounter{tecounter}{1}
        \setlength{\leftmargin}{08pt}
        \setlength{\topsep}{1pt}
        \setlength{\partopsep}{0pt}
        \setlength{\itemsep}{2pt}
        \setlength\labelwidth{0pt}}
        \ignorespaces}
{\unskip\end{list}}


\SetAlFnt{\small}
\SetAlCapFnt{\normalsize}
\SetAlCapNameFnt{\normalsize}
\usepackage[noend]{algorithmic}
\algsetup{linenosize=\tiny}

\providecommand{\fine}[1]{\ensuremath{\widetilde{#1}}}
\providecommand{\inv}[1]{\ensuremath{{#1}^{-1}}}

\providecommand{\of}{\ensuremath{\circ}}
\providecommand{\trto}[4]{\ensuremath{\operatorname{\textsc{TransTo}}({#2}\xrightarrow{#3}{#4})}^{#1}}
\providecommand{\sde}[1]{\ensuremath{\operatorname{\textsc{Sde}}({#1})}}
\providecommand{\pgprod}{\ensuremath{\otimes}}

\newcommand{\defeq}{\ensuremath{\coloneqq}}
\providecommand{\True}{\ensuremath{\operatorname{{True}}}\xspace}
\providecommand{\False}{\ensuremath{\operatorname{False}}\xspace}

\providecommand{\reachedv}[2]{\textcolor{setformula}{\ensuremath{\mathcal{V}^{#1}_{#2}}}}
\providecommand{\reachings}[2]{\textcolor{setformula}{\ensuremath{\mathcal{S}^{#1}_{#2}}}}

\providecommand{\exactreachings}[2]{\textcolor{setformula}{\ensuremath{\mathbb{S}}^{#1}_{#2}}}

\providecommand{\compatablew}[2]{\textcolor{setformula}{\ensuremath{\mathcal{W}}^{#1}_{#2}}}

\providecommand{\nameseek}[1]{{\sc Seek${}_{#1}$}}
\providecommand{\nameseekp}{\nameseek{\var{x}}}

\providecommand{\nameseekplm}{\nameseek{\var{x}, \var{\lambda}}}

\providecommand{\veq}[3]{\textcolor{equivrel}{#1\underset{#2}{\sim}#3}}
\providecommand{\nveq}[3]{\textcolor{equivrel}{#1\underset{#2}{\not\sim}#3}}

\providecommand{\aNd}{{\sc and}\xspace}
\providecommand{\Or}{{\sc or}\xspace}

\newcommand\blfootnote[1]{%
  \begingroup
  \renewcommand\thefootnote{}\footnote{#1}%
  \addtocounter{footnote}{-1}%
  \endgroup
}

\begin{document}

\title{
Finding plans subject to stipulations on what information they divulge%
}

\author{Yulin Zhang\inst{1}, Dylan A. Shell\inst{1} \and Jason M. O'Kane\inst{2}}
\institute{
Texas A\&M University, College Station TX, USA\\
\and 
University of South Carolina, Columbia SC, USA%
}

\maketitle

\begin{abstract}
Motivated by applications where privacy is important, we study planning
problems for robots acting in the presence of an observer.  We first formulate
and then solve planning problems subject to stipulations on the information
divulged during plan execution---the appropriate solution concept being both a
plan and an information disclosure policy.  We pose this class of problem under
a worst-case model within the framework of procrustean graphs, formulating the
disclosure policy as a particular type of map on edge labels.  We devise
algorithms that, given a planning problem supplemented with an information
stipulation, can find a plan, associated disclosure policy, or both jointly, if
and only if some exists. The pair together, 
comprising the plan and associated disclosure policy, 
may depend subtly on
additional information available to the
observer, such as whether the observer knows the robot's plan (e.g., leaked via
a side-channel).  
Our implementation finds a plan and a suitable disclosure policy,
jointly, when any such pair exists, albeit for small problem instances.
\end{abstract}

\blfootnote{This work was supported by NSF awards 
\href{http://nsf.gov/awardsearch/showAward?AWD_ID=1453652}{IIS-1453652},
\href{http://nsf.gov/awardsearch/showAward?AWD_ID=1527436}{IIS-1527436},
and 
\href{http://nsf.gov/awardsearch/showAward?AWD_ID=1526862}{IIS-1526862}.}

\vspace*{-26pt}
\section{Introduction}
\vspace*{-6pt}
In 2017, iRobot announced that they intended to sell maps of people's homes,
as generated by their robot vacuum cleaners.  The result was a public
outcry~\cite{privacynews}.  It is increasingly clear that, as robots become part
of our everyday lives, the information they could collect (indeed, may
\emph{need} to collect to function) can be both sensitive and valuable.
Information about a robot's internal state and its estimates of the world's
state are leaked by status displays, logged data, actions executed, and
information outputted --- often what the robot is tasked with doing. The tension
between utility and privacy is fundamental.

Typically, robots strive to decrease uncertainty.
Some prior work, albeit limited, has illustrated how to cultivate uncertainty,
examining how to constrain a robot's beliefs so that it never learns sensitive
information (cf.~\cite{OKa08,ddf2015,zhang18complete}). In so doing, one
precludes sensitive information being disclosed to any adversary. But not
disclosing secrets by simply never knowing any, limits the
applicability of the approach severely. This paper proposes a more general,
wider-reaching model for privacy, beyond mere ing\'{e}nue robots.

This article posits a potentially adversarial observer and then stipulates
properties of what shall be divulged.  The stipulation describes information
that must be communicated (being required to perform the task) as well as
information (confidential information potentially violating the user's privacy)
that shouldn't be.  Practical scenarios where this model applies include: ($i$)
privacy-aware care robots that
assist the housebound,
providing nursing care; ($ii$) inspection of sensitive facilities by robots to
certify compliance with regulatory agreements, whilst protecting other
proprietary or secret details; ($iii$) sending data remotely to computing
services on untrusted cloud infrastructure.  


\setlength{\textfloatsep}{5pt}
\begin{SCfigure}
\hspace*{-5pt}
\begin{minipage}{0.25\textwidth}
\vspace*{-13pt}
\includegraphics[scale=0.44]{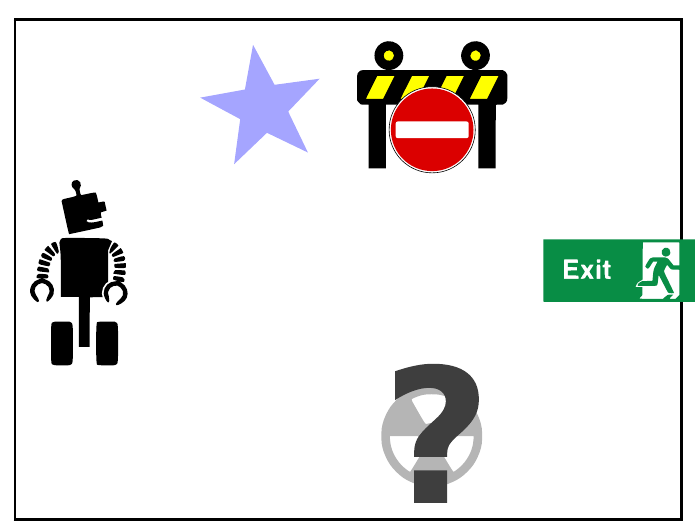}\\
\small\centering Pebble bed facility
\end{minipage}%
\begin{minipage}{0.28\textwidth}
\vspace*{-15pt}
\includegraphics[scale=0.44]{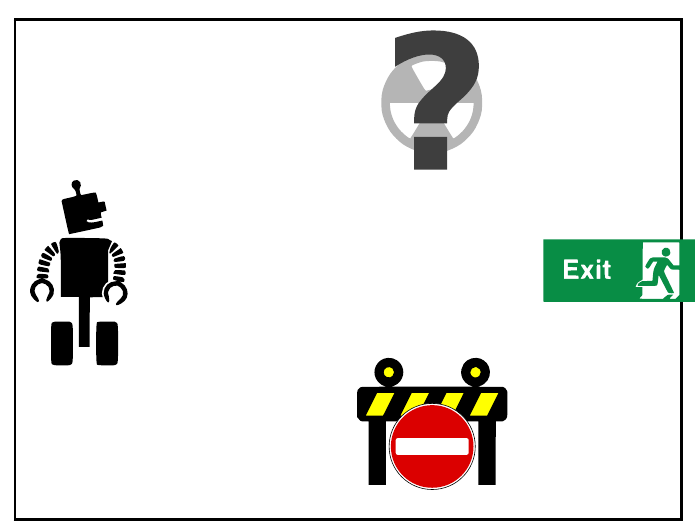}\\
\small\centering Breeder reactor
\end{minipage}%
\caption{{\bf Nuclear Site Inspection\;\;}%
A robot inspects a nuclear facility by taking a measurement at the 
`{\bf ?}' location, which depends on the facility type.
But the type of the facility is sensitive information that it must not be divulged to any external observers.
\label{fig:nuclear}}
\end{SCfigure}

Figure~\ref{fig:nuclear} illustrates a scenario which, though simplistic, is
rich enough to depict several aspects of the problem.  The task requires that a
robot determine whether some facility's processing of raw radioactive material
meets international treaty requirements or not.  The measurement procedure
itself depends on the type of facility as the differing physical arrangements
of  `pebble bed' and `breeder' reactors necessitate different actions.  First,
the robot must actively determine the facility type (checking for the presence
of the telltale blue light in the correct spot).  Then it can go to a location
to make the measurement, the measurement location corresponding with the
facility type.  But the facility type is sensitive information and the robot
must ascertain the radioactivity state while ensuring that the facility type is not
disclosed.

What makes this scenario interesting is that the task is rendered
infeasible immediately if one prescribes a policy to ensure that the robot never gains
sensitive information.  Over and above the (classical) question of how to balance
information-gathering and progress-making actions, the robot must control what
it divulges, strategically increasing uncertainty as needed, precisely limiting
and reasoning about the `knowledge gap' between the external observer and
itself.  To solve such problems, the robot needs a carefully constructed plan
and must establish a policy characterizing what information it divulges, the
former achieving the goals set for the robot, the latter respecting all
stipulated constraints---and, of course, each depending on the other.

\subsection{Contributions and itinerary}

This paper contributes the first formulation, to our knowledge, of planning
where solutions can be constrained so as to require that some information be
communicated and other information obscured subject to an adversarial model of
an observer. Nor do we know of other work where both a plan and some notion of
an interface (the disclosure policy, in our terminology) can both be solved for
jointly. The paper is organized as follows: after discussion of related work,
Section~\ref{sec:preliminary} develops the preliminaries, notation, and
formalism, Section~\ref{section:divulgedplan} addresses an important technical
detail regarding an observer's background knowledge, and
Section~\ref{section:search} finding plans that satisfy the stipulations.   
The last section reports experiments conducted with our implementation. 

\section{Related work}
\label{sec:related}

An important topic in HRI is expressive action (e.g.,
see~\cite{takayama11expressing}).  In recent years there has been a great deal
of interest in mathematical models that enable generation of communicative
plans.  Important formulations include those of
\cite{dragan17robot,knepper17implicit}, proposing plausible models for human
observers (from the perspectives of presumed cost efficiency, surprisal, or
generalizations thereof). In this prior work, conveying information becomes
part of an optimization objective, whereas we treat it as a constraint instead.
Both \cite{dragan17robot} and \cite{knepper17implicit} are probabilistic in nature, here
we consider a worst-case model that is arguably more suitable for privacy considerations:
We ask what an observer can plausibly infer via the history of its received
observations.  In doing so, we are influenced by the philosophy of
LaValle~\cite{lavalle2006planning}, following his use of the term information
state (I-state) to refer to a representation of information derived from a
history of observations.\gobble{\footnotemark}  Finally, since parts of our stipulations may require
concealing information, we point out there is also recent work in deception (see
\cite{masters17deceptive,dragan15deception}) and also obfuscation
\cite{Wu2016Obfuscator}.






\section{The model: worlds, robots and observers}
\label{sec:preliminary}

Figure~\ref{fig:modeloverview} illustrates the three-way relationships
underlying the setting we examine.  Most fundamentally, a robot executes a
\emph{plan} to achieve some goal in the \emph{world}, and the coupling of these
two elements generates a stream of observations and actions. Both the plan and
the action--observation stream are disclosed, though potentially only partially,
to a third party, we term the \emph{observer}. The observer uses the stream, its
knowledge of the plan, and also other known structure to infer properties about
the interaction. Additionally, a stipulation is provided specifying particular
properties that can be learned by the observer.  We formalize these elements in
terms of p-graphs and label maps (see~\cite{saberifar18pgraph}).

\setlength{\textfloatsep}{10pt}
\begin{SCfigure}[60]
\centering
\includegraphics[scale=0.7]{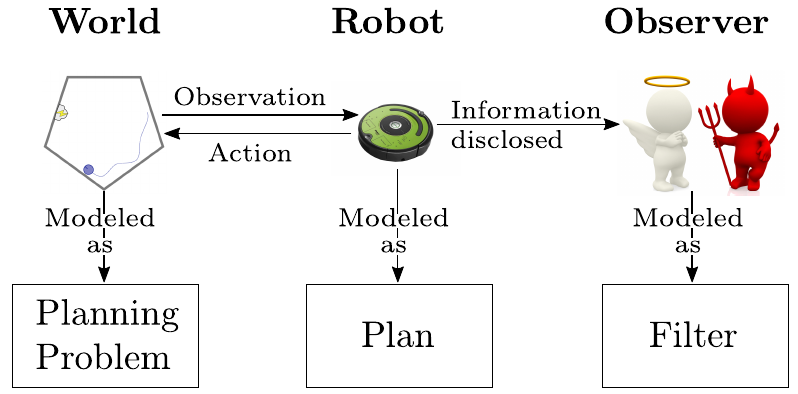}
\caption{An overview of the setting: the robot is modeled abstractly as realizing
a plan to achieve some goal in the world and a third party observes, modeled as a
filter.  All three, the world, plan, and filter have concrete
representations as p-graphs.%
\label{fig:modeloverview}}
\end{SCfigure}

\vspace*{-20pt}
\subsection{P-graph and its interaction language}
We will start with the definition of p-graphs~\cite{saberifar18pgraph} and related properties:
\begin{definition}[p-graph]
A \emph{p-graph} is an edge-labelled directed bipartite graph with $G=(V_y\cup
V_u, Y, U, V_0)$, where 
\begin{tightenumerate}
\item the finite vertex set 
$V(G)\defeq V_y\cup V_u$, whose elements are also called
\emph{states}, comprises two disjoint subsets: the
\emph{observation vertices} $V_y$ and the \emph{action vertices} $V_u$,

\item each edge $e$ originating at an observation vertex bears a set of
observations ${Y(e) \subseteq Y}$, containing \emph{observation labels}, and
leads to an action vertex, \item each edge $e$ originating at an action vertex
bears a set of actions ${U(e) \subseteq U}$, containing \emph{action labels},
and leads to an observation vertex, and
\item a non-empty set of states $V_0$ are designated as \emph{initial states},
which may be either exclusively action states ($V_0\subseteq V_u$) or
exclusively observation states ($V_0\subseteq V_y$).
\end{tightenumerate}
\end{definition}

An \emph{event} is an action or an observation. Respectively, they make up the
sets $U$ and $Y$, which are called the p-graph's \emph{action space} and
\emph{observation space}. We will also\gobble{ have occasion to} write $Y(G)$ and $U(G)$
for the observation space and action space of $G$. Though that is a slight
abuse of notation, the initial states will be written $V_0(G)$, similarly. 

Intuitively, a p-graph abstractly represents a (potentially non-deterministic)
transition system where transitions are either of type `action' or
`observation' and these two alternate. The following definitions make this
idea precise.
%
\vspace*{-3pt}
\begin{definition}[transitions to]
For a given p-graph $G$ and two states $v,w\in V(G)$, a sequence of events
$\ell_1, \dots, \ell_k$ \emph{transitions in $G$ from $v$ to $w$} if there exists a
sequence of states $v_1, \dots, v_{k+1}$, such that $v_1=v$, $v_{k+1}=w$, and
for each $i=1, \dots, k$, there exists an edge $v_i\xrightarrow{E_i}{v_{i+1}}$
for which $\ell_i\in E_i$, and $E_i$ is a subset of $Y(G)$ if $v_i$ is in
$V_y$, or a subset of $U(G)$ if $v_i$ is in $V_u$.
\end{definition}

Concisely, we let the predicate $\trto{G}{v}{s}{w}$ hold if there is some way of
tracing $s$ on $G$ from $v$ to $w$, i.e., it is \True iff $v$ transitions to $w$
under execution $s$.  Note, when $G$ has non-deterministic transitions,
$v$ may transition to multiple vertices under the same execution.  We only
require that $w$ be one of them.
\vspace*{-3pt}
\begin{definition}[executions and interaction language]
An \emph{execution} on a p-graph $G$ is a finite sequence of events $s$, if
there exists some $v\in V_0(G)$ and some $w\in V(G)$ for which
$\trto{G}{v}{s}{w}$.  The set of all executions on $G$ is called the
\emph{interaction language} (or, briefly, just \emph{language}) of $G$ and is
written $\Language{G}$.
\end{definition}

Given any edge $e$, if $U(e)=L_e$ or $Y(e)=L_e$, we speak of $e$ \emph{bearing}
the set $L_e$. 
%
\begin{definition}[joint-execution]
A \emph{joint-execution} on two p-graphs $G_1$ and $G_2$ is a sequence of events $s$
that is an execution of both $G_1$ and $G_2$, written as $s\in
\Language{G_1}\cap \Language{G_2}$. The p-graph producing all the
joint-executions of $G_1$ and $G_2$ is their tensor product graph with initial
states $V_0(G_1) \times V_0(G_2)$, which we denote $G_1\pgprod G_1$. 
\end{definition}

A vertex from $G_1\pgprod G_2$ is as a pair $(v_1, v_2)$, where $v_1\in V(G_1)$
and $v_2\in V(G_2)$. Next, the relationship between the executions and vertices
is established.
\begin{definition}
The set of vertices reached by execution $s$ in $G$, denoted $\reachedv{G}{s}$, 
are the vertices to which the execution $s\in \Language{G}$ transitions,
starting at an initial state. Symbolically, $\reachedv{G}{s} \defeq\{v \in
V(G)\;|\;\exists v_0 \in V_0(G), \trto{G}{v_0}{s}{v}\}.$ Further, the set of
executions reaching vertex $v$ in $G$ is written as
$\reachings{G}{v}\defeq\{s\in \Language{G}\,|\,v\in \reachedv{G}{s}\}$.
\end{definition}


The naming here serving to remind that $\mathcal{V}$ describes sets of vertices,
$\mathcal{S}$ describes sets of strings/executions.  
The collection of sets $\{\reachings{G}{v_0}, \reachings{G}{v_1}, \dots,
\reachings{G}{v_i}\dots\}$ can be used to form an equivalence relation
$\veq{}G{}$ over executions, under which $\veq{s_1}{G}{s_2}$ if and only if
\mbox{$\reachedv{G}{s_1}=\reachedv{G}{s_2}$}.  This equivalence relation partitions the
executions in $\Language{G}$ into a set of non-empty equivalence classes:
$\Language{G}/{\veq{}{G}{}}=\{[r_0]_G, [r_1]_G, [r_2]_G, \dots\}$, where each
equivalence class is $[r_i]_G=\{s\in \Language{G}\,|\,\veq{r_i}{G}{s}\}$ and $r_i$
is a representative execution in $[r_i]_{G}$.  The intuition is that any two
executions that transition to identical sets of vertices are, in an important
sense, indistinguishable.

We shall consider systems where the vertices of a
p-graph constitute the state that is stored, acted upon, and/or
represented---they are, thus, akin to a `sufficient statistic'.

\begin{definition}[state-determined]
A p-graph $G$ is in a \emph{state-determined} presentation, or is in
\emph{state-determined} form, if $\,\forall s\in \Language{G},
|\reachedv{G}{s}|=1$.  
\end{definition}

The procedure to expand any p-graph $G$ into a state-determined presentation
$\sde{G}$ can be found in Algorithm 2 of~\cite{saberifar18pgraph}. 
The language of p-graphs is not affected by state-determined
expansion, i.e., $\Language{G}=\Language{\sde{G}}$.


Next, one may start with vertices and ask about the executions reaching those
vertices. (Later, this will be part of how an observer makes inferences about
the world.)



\begin{definition}
\label{vertextoequiv}
Given any set of vertices $B\subseteq V(G)$ in p-graph $G$, the set of
executions that reach exactly (i.e. reach and reach only) $B$ is
$\exactreachings{G}{B}\defeq(\cap_{v\in B} \reachings{G}{v})\setminus \cup_{v\in
(V(G)\setminus B)} \reachings{G}{v}$.
\end{definition}

Above, the $\cap_{v\in B} \reachings{G}{v}$ represents the set of executions
that reach every vertex in $B$. By subtracting the ones that also reach the
vertices outside $B$, $\exactreachings{G}{B}$ describes the set of executions
that reach exactly $B$. In Figure~\ref{fig:stringbehindvertices}, the executions
reaching $w_3$ are represented as $\reachings{G}{w_3}=\{a_1o_1, a_2o_1\}$. But
the executions reaching and reaching only $\{w_3\}$ are
$\exactreachings{G}{\{w_3\}}=\{a_1o_1\}$ since $a_2o_1$ also reaches $w_4$.
Specifically, the equivalence class $[r_i]_G$ contains the executions that reach
exactly $\reachedv{G}{r_i}$, so we have
$[r_i]_G=\exactreachings{G}{\reachedv{G}{r_i}}$.

\vspace*{-15pt}
\begin{SCfigure}[40]
{\includegraphics[scale=0.85]{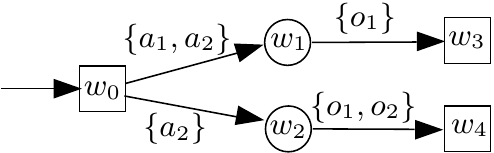}}\hspace{20pt}
\caption{An example showing the difference between `reaches' and `reaches
exactly' as distinguished in notation as $\reachings{G}{w}$ and
$\exactreachings{G}{\{w\}}$.\label{fig:stringbehindvertices}}
\vspace*{2pt}
\end{SCfigure}
\vspace*{-20pt}

\subsection{Planning problems and plans}

In the p-graph formalism, planning problems and plans are defined as
follows~\cite{saberifar18pgraph}.

\begin{definition}[planning problems and plans]
A \emph{planning problem} is a p-graph $W$ along with a {goal region} $V_{\goal}
\subseteq V(W)$; a \emph{plan} is a p-graph $P$ equipped with a 
{termination region} $V_{\term} \subseteq V(P)$.
\end{definition}

Planning problem $(W, V_{\goal})$ is solved by some plan $(P, V_{\term})$ if the
plan always terminates (i.e., reaches $V_{\term}$) and only terminates at a
goal. Said with more precision:

\begin{definition}[solves]
\label{def:solves}
A plan $(P,V_{\term})$ \emph{solves} a planning problem $(W,V_{\goal})$ if there
is some integer which bounds length of all joint-executions, and for each
joint-execution and any pair of nodes $(v \in V(P),w \in V(W))$ reached by that
execution simultaneously, the following conditions hold:
\begin{tightenumerate}
\item if $v$ and $w$ are both action nodes and, for every label borne by each
edge originating at $v$, there exist edges originating at $w$ bearing the same
action label; \item if $v$ and $w$ are both observation nodes and, for every
label borne by each edge originating at $w$, there exist edges originating at
$v$ bearing the same observation~label; 
\item if $v \in V_{\term}$ and then $w \in V_{\goal}$;
\item if $v \notin V_{\term}$ then some extended joint-execution exists,
continuing from $v$ and $w$, that does reach the termination region.
\end{tightenumerate}
\end{definition}

In the above, properties 1) and 2) describe a notion of safety; property 3) of
correctness; and 4) of liveness.  In the previous definition, there is an upper
bound on joint-execution length.  We say that plan $(P,V_{\term})$ is
\emph{$c$-bounded} if, $\forall s\in \Language{P}$, $|s|\leq c$. 

\subsection{Information disclosure policy, divulged plan, and observer}

The agent who is the observer sees a stream of the robot's actions and observations, and uses
them to build estimates (or to compute general properties) of the robot's
interaction with the world. But the observer's access to this information will
usually be imperfect---either by design, as a consequence of real-world
imperfections, or some combination of both.  Conceptually, this is a form of
partial observability in which the stream of symbols emitted as part of the
robot's execution is distorted into to the symbols seen by the observer (see
Figure~\ref{fig:dataflow}).  For example, if some pairs of actions are
indistinguishable from the perspective of the observer, this may be expressed
with a function that maps those pairs of actions to the same value.  In this
paper, this barrier is what we have been referring to (informally, thus far)
with the phrase \emph{information disclosure policy}. It is formalized as a
mapping from the events in the robot's true execution in the world p-graph to
the events received by the observer.  

\vspace*{-6pt}
\begin{SCfigure}
 	\centering
	\includegraphics[scale=0.73]{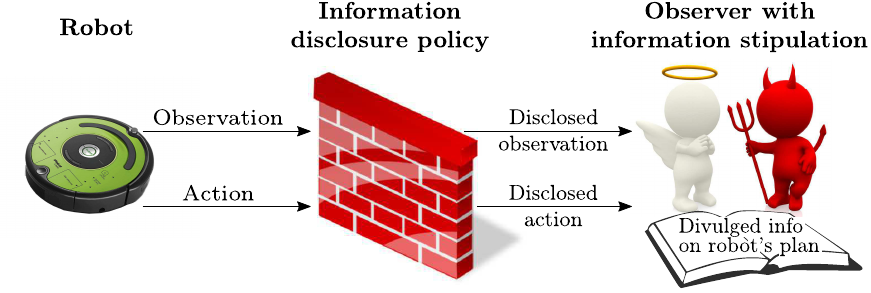}
\caption{The information disclosure policy, divulged plan and information
stipulation.  Even when the observer is a strong adversary, the disclosure
policy and divulged plan can limit the observer's capabilities effectively.}
\label{fig:dataflow}
\vspace*{7pt}
\end{SCfigure}
\vspace*{-18pt}

\vspace*{-2pt}
\begin{definition}[Information disclosure policy]
An \emph{information disclosure policy} is a label map $h$ on p-graph
$G$, mapping from elements in the combined observation and action space
$Y(G)\cup U(G)$ to some set of events $X$.
\end{definition}
\vspace*{-2pt}

The word `policy' hints at two interpretations: first, as something given as
a predetermined arrangement (that is, as a rule); secondly, as something to be
sought (together with a plan).  Both senses apply in the present work;
the exact transformation describing the disclosure of information will be used
first (in Section~\ref{section:seekplan}) as a specification and then, later (in
Section~\ref{section:seekplanlabelmap}) as something which planning algorithms
can produce. How the information disclosure policy is realized in some setting
depends on which sense is apt: it can be interpreted as describing observers
(showing that for those observers unable to tell $y_i$ from $y_j$, the
stipulations can be met), or it can inform robot operation (the stipulations
require that the robot obfuscate $u_\ell$ and $u_m$ via means such as explicit
concealment, sleight-of-hand, misdirection, etc.)

The observer, in addition, may also have imperfect knowledge of robot's plan,
which is leaked or communicated from the side-channel. The \emph{disclosed plan}
is also modeled as a p-graph, which may be weaker than knowing the actual plan.
A variety of different types of divulged plan are introduced later (in
Section~\ref{section:divulgedplan}) to model different prior knowledge available
to an observer; as we will show, despite their differences, they can be treated
in a single unified way.

The next step is to provide formal definitions for the ideas just described.  In
the following, we refer to $h$ as the map from the set $Y\cup U$ to some set
$X$, and refer to its preimage $\inv{h}$ as the map from $X$ to subsets of
$Y\cup U$. The notation for a label map $h$ and its preimage $\inv{h}$ is
extended in the usual way to sequences and sets: we consider sets of events,
executions (being sequences), and sets of executions. They are also extended to
p-graphs in the obvious way, by applying the function to all edges.

For brevity's sake, the outputs of~$h$ will be referred to simply as `the
image space.' The function $h$ may either preserve information (when a
bijection) or lose information (with multiple inputs mapped to one 
output). 
The loss of information is felt in $Y\cup U$ by the extent to which some element of
$Y \cup U$ grows under $\inv{h} \of h$, and for all  $\ell\in Y\cup U$,
$\inv{h}\of h(\ell)\supseteq \{\ell\}$.  In contrast, starting from $x \in X$,
the uncertainty, apparent via set cardinality under $\inv{h}$, is washed out
again when pushed forward to the image space $X$ via $h\of\inv{h}$,  i.e.,
$\forall x\in X$, $h\of\inv{h}(x)=\{x\}$.

\vspace*{-2pt}
\begin{definition}[I-state graph]
For planning problem $(W,V_{\goal})$, plan $(P,V_{\term})$ and information
disclosure policy $h: Y(W)\cup U(W) \to X$, an observer's \emph{I-state graph}
$I$ is a p-graph, whose inputs are from the image space of  $h$ (i.e., $Y(I)\cup
U(I)=X$), with $\Language{I} \supseteq h[\Language{W}]$.  
The action space and observation space of $I$ are also written as $X_u=U(I)$ and
$X_y=Y(I)$. 
\end{definition}

Inherited from the property of $h\of\inv{h}$, for any I-state graph $I$, we have
$I=h\of\inv{h}\langle I\rangle$, and $\forall B\subseteq V(I),
\inv{h}[\exactreachings{I}{B}]=\exactreachings{\inv{h}\langle I\rangle}{B}$.

The observer's I-state graph is a p-graph with events in the image space $X$. By
having $\Language{I} \supseteq h[\Language{W}]$, we are requiring that strings
generated in the world can be safely traced on $I$.

Next, we formalize the crucial connection from the
interaction of the robot and world, via the stream of symbols generated, to the
state tracked by the observer.  Inference proceeds from the observer back to the world,
though causality runs the other way (glance again
at~Figure~\ref{fig:modeloverview}). We begin, accordingly, with that latter
direction.

\begin{definition}[compatible world states]
\label{def:compatibleworldstates}
Given observer I-state graph $I$, robot's plan $(P, V_{\term})$, world graph
$(W, V_{\goal})$, and label map $h$, the world state $w$ is \emph{compatible}
with the set of I-states $B\subseteq V(I)$ if $\exists s\in \Language{W}$ such
that $s\in \underset{\strut(1)}{\underbrace{\inv{h}[\exactreachings{I}{B}]}}
\cap \underset{\strut(2)}{\underbrace{\Language{P}}} \cap
\underset{\strut(3)}{\underbrace{\reachings{W}{w}}}$.
\end{definition}
\vspace*{-16pt}

Informally, each of the three terms can be interpreted as:\vspace*{-3pt}
\begin{enumerate}[(1)]
\item An observer with I-state graph $I$ may ask which sequences are responsible
for having arrived at states $B$.  The answer is the set
$\exactreachings{I}{B}$, being the executions contained in equivalence classes
that are indistinguishable up to states in~$I$.  Those strings are in the image
space $X$, so, to obtain an answer in $Y\cup U$, we take their preimages. 
Every execution in $\inv{h}[\exactreachings{I}{B}]$ leads the
observer to $B$. Note that
information may be degraded by either $h$, $I$, or both. 
\gobble{
Figure~\ref{suppl:fig:infocollapse}
provides a visual example that shows how information can be degraded by a label
map $h$, an I-state graph $I$, and both together.  The first gives a scenario by
providing a world p-graph $W$, a plan $P$, and divulged plan information
$D$---all three are identical.  The second figure shows an I-state graph with
the same structure as $W$ and an identity label map. Every I-state corresponds
to a single world state in this case. In the third figure, there is an I-state
graph with the same structure as $W$, thus clearly possessing sufficient
structure to account for the world states. But here a label map conflates some
actions and some observations. A consequence is that the world states $w_1$ and
$w_2$ are indistinguishable given I-state $i_1$ and plan $P$. In the last figure
both $h$ and $I$ degrade information and do so independently. In this case,
$w_3$ and $w_4$ are indistinguishable owing to the label map, $w_5$ and $w_6$
are indistinguishable owing to the collapsed structure in $I$.}
\gobble{
\begin{figure}[t!]
\centering
\includegraphics[scale=0.78]{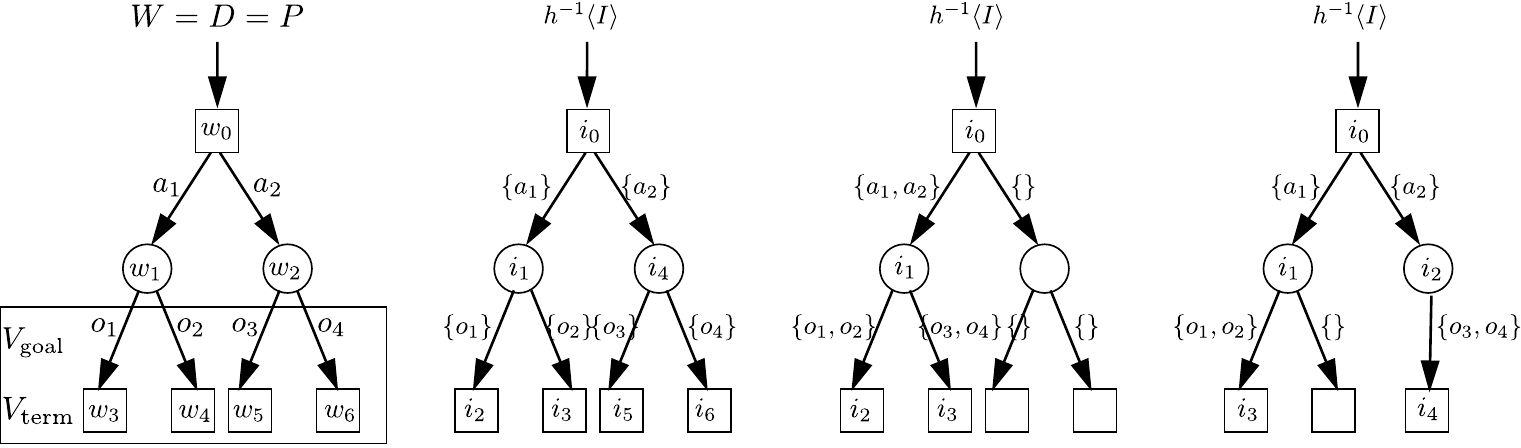}
\caption{Both the label map and the  I-state graph can degrade information.
Leftmost:~a scenario where the world p-graph $W$, a plan $P$, and divulged plan
information $D$ are all identical.  Second from the left: an I-state graph $I$
with the same structure as $W$ and an identity label map $h$.  Second from the
right: an I-state graph with $I$ the same structure as $W$ and a label map $h$
which conflates some actions/observations.  Rightmost:~both $h$
and $I$ degrade information independently.  \label{suppl:fig:infocollapse}
}
\vspace*{-10pt}
\end{figure}
}

\item The set of executions that may be executed by the robot is represented by
$\Language{P}$. If the observer knows that the robot's plan takes, say, the high
road, this information allows the observer to remove executions involving the
robot along the low road. 

\item The set of executions reaching world state $w$ is represented by
$\reachings{W}{w}$.  Two world states $w, w'\in V(W)$ are essentially
indiscernible or indistinguishable if ${\reachings{W}{w}=\reachings{W}{w'}}$, as
the sets capture the intrinsic uncertainty of world $W$.

\end{enumerate}

When an observer is in $B$, and $w$ is compatible with $B$, there
exists some execution, a certificate, that the world could plausibly be in $w$
subject to (1)~the current information summarized in $I$; (2)~the robot's plan;
(3)~the structure of the world. The set of \emph{all} world states that are
compatible with $B$ is denoted $\compatablew{I, P}{B}$, which is the observer's
estimate of the world states when known information about $W$, $P$ and $I$ have
all been incorporated.


A typical observer may know less about the robot's future behavior than the
robot's full plan. Weaker knowledge of how the robot will behave can be
expressed in terms of some p-graph $D$, such that ${\Language{D}\supseteq
\Language{P}}$.  (Here the mnemonic is that it is the divulged information about
the robot's plan, which one might imagine as leaked or communicated via a
side-channel.)
Notice that the information divulged to the observer about
the robot's execution is in the preimage space.
The key reason for this modeling
decision is that information may be lost under label map $h$; an observer 
gains the greatest information when the plan is disclosed in the preimage space and, as
we consider worst-case
conditions, we are interested in what the strongest  
(even adversarial) observers might infer. Thus, we study divulgence where the
observer obtains as much as possible.

Definition~\ref{def:compatibleworldstates} requires the substitution of
the second term in the intersection with $\Language{D}$.  When only $D$ is
given, the most precise inference replaces $\compatablew{I, P}{B}$ with
$\compatablew{I,D}{B}$:

\begin{definition}[estimated world states]
\label{def:estimatedworldstates}
Given an I-state graph $I$, divulged plan p-graph $D$, world p-graph $W$, and
label map $h$, the set of estimated world states for I-states $B\subseteq V(I)$
is $\compatablew{I,D}{B} \defeq \left\{w\in
V(W)\;\middle|\;(\exactreachings{\inv{h}\langle I\rangle}{B}\cap \Language{D}
\cap \reachings{W}{w})\neq \emptyset\right\}$.
\end{definition}
\vspace*{-6pt}

Observe that $\inv{h}[\exactreachings{I}{B}]$ has been replaced with
$\exactreachings{\inv{h}\langle I\rangle}{B}$, since
$\inv{h}[\exactreachings{I}{B}]=\exactreachings{\inv{h}\langle I\rangle}{B}$.
\label{sec:discussion_of_d}

The last remaining element in Figure~\ref{fig:dataflow} that needs to be
addressed is the stipulation of information. We do that next.\vspace*{-6pt}

\subsection{Information stipulations}

%
%


We prescribe properties of the information that an observer may extract from its
input by imposing constraints on the sets of estimated world states.  The
observer, filtering a stream of inputs sequentially, forms a correspondence
between its I-states and world states.  We write propositional formulas with
semantics defined in terms of this correspondence---in this model
the stipulations are written to hold over \emph{every} reachable set of
associated states.\footnote{We foresee other variants which are straightforward
to modifications to consider; but we report only on our current implementation.}
 
\begin{figure}[t]
{
\begin{equation*}
\begin{aligned}
&{\rm Formula} \rightarrow {\rm Clause}_1 \form{\land} \dots\form{\land} {\rm
Clause}_n\\[-2pt] &{\rm Clause}  \rightarrow {\rm Literal}_1 \form{\lor} \dots
\form{\lor} {\rm Literal}_m\\[-2pt]
&{\rm Literal} \rightarrow {\rm Symbol}\;|\; \form{\neg} {\rm Symbol}\\[-2pt]
&{\rm Symbol}  \rightarrow \form{\mathpzc{v_0}}, \form{\mathpzc{v_1}},
\form{\mathpzc{v_2}}, \dots\\[-8pt]
\end{aligned}
\end{equation*}}
\\
\setlength{\extrarowheight}{20pt}
\setlength{\tabcolsep}{12pt}
\centering
\begin{tabular}{cc}
\infer{ \langle \form{\mathpzc{v_i}} \rangle\Downarrow
\operatorname{eval}(v_i\stackrel{?}{\in} \compatablew{I,D}{B}
)}{[{\text{\footnotesize VALUE}}]~} & \infer{\langle \form{\neg \mathpzc{v_i}}
\rangle \Downarrow \text{the negation of }w}{[\textrm{\footnotesize NOT}]~~~~~~~
\langle \form{\mathpzc{v_i}}\rangle \Downarrow w~~~~~~~~~} \\ {\infer{ \langle
\form{\mathpzc{\ell_1} \lor \mathpzc{\ell_2}} \rangle \Downarrow \text{the
logical $\operatorname{or}$ of $w_1$ and $w_2$}}{[\textrm{\footnotesize OR}]
~~~~~~~~~ \langle \form{\mathpzc{\ell_1}} \rangle \Downarrow w_1~~~ \langle
\form{\mathpzc{\ell_2}}\rangle  \Downarrow w_2~~~~~~~~~}} & {\infer{\langle
\form{\mathpzc{c_1} \land \mathpzc{c_2}} \rangle \Downarrow \text{the logical
$\operatorname{and}$ of $w_1$ and $w_2$}}{[\textrm{\footnotesize AND}] ~~~~~~~~~
\langle \form{\mathpzc{c_1}} \rangle \Downarrow w_1~~~ \langle
\form{\mathpzc{c_2}}\rangle \Downarrow w_2~~~~~~}} \\ \end{tabular}
\caption{The syntax and natural semantics of the information stipulations,
where $\form{\mathpzc{c_i}}$, $\form{\mathpzc{\ell_i}}$, $\form{\mathpzc{v_i}}$,
represent a clause, literal, and symbol, respectively, and $w_i$ is the result
of the evaluation. The transition $\langle\form{\mathpzc{e}}\rangle \Downarrow
w$ denotes a transition, where $\form{\mathpzc{e}}$ is any expression defined by
the grammar and $w$ is the value yielded by the expression. }
\label{suppl:fig:operationalsemantics}
\vspace*{-1pt}
\end{figure}

First, however, we must delineate the scope of the estimated world states to be
constrained.  Some states, in inherently non-deterministic worlds, may be
inseparable because they are reached by the same execution.  In
Figure~\ref{fig:stringbehindvertices}, both $w_3$ and $w_4$ will be reached
(non-deterministically) by execution $a_2o_1$. Since this is intrinsic to the
world, even when the observer has perfect observations, they remain
indistinguishable. In the remainder of this paper, we will assume that the
world graph $W$ is in state-determined form, and we may affix stipulations to
the world states knowing that no two vertices will be non-deterministically
reached by the same execution.

Second, we write propositional formulae to constrain the observer's estimate.
Formula $\form{\Phi}$ is written in conjunctive normal form, consisting of
symbols, literals and clauses as shown in \fref{suppl:fig:operationalsemantics}.
Firstly, an atomic symbol $\form{\mathpzc{v_i}}$ is associated with each world
state $v_i\in V(W)$. If $v_i$ is contained in the observer's estimates
$\compatablew{I, D}{B}$, we will evaluate the corresponding symbol
$\form{\mathpzc{v_i}}$ as $\True$.  It evaluates as $\False$ otherwise. With
each symbol grounded in this way, we evaluate literals and clauses
compositionally, using logic operators {\sc not}, {\sc and}, {\sc or}. These are
defined in the standard way, eventually enabling evaluation of $\form{\Phi}$ on
the observer's estimate $\compatablew{I, D}{B}$.
\gobble{Suppose, for
example, we wish to require that state $v_1$ be included in the observer's
estimates of the world whenever $v_2$
is; this would be expressed via \form{$\Phi = \neg\mathpzc{v_2} \vee
\mathpzc{v_1}$}.  Evaluation of such formulas takes place as follows.  For a set
of I-states $B$ reached under some operation of the robot in the world,
\form{$\mathpzc{v_i}$} is connected with $v_i$ in that \form{$\mathpzc{v_i}$}
evaluates to \True for $B$ iff $v_i \in \compatablew{I,D}{B}$, where
$\compatablew{I,D}{B}$ is the set of estimated world states for I-states $B$.
The opposite condition, where $v_i\not \in \compatablew{I,D}{B}$, is written
naturally as \form{$\neg \mathpzc{v_i}$}.  Standard connectives \form{$\neg$},
\form{$\land$}, \form{$\lor$} enable composite expressions for complex
stipulations to be built.}

Let the predicate $\operatorname{satfd}(B,\form{\Phi})$ denote whether the
stipulation $\form{\Phi}$ holds for I-states $B$. Then a plan $P$ satisfies the
stipulations, if and only if
\vspace{-4pt}
\[
\forall s\in \Language{P}\cap \Language{W}\;
B=\reachedv{I}{h(s)}\,\;\operatorname{satfd}(B,\form{\Phi}). 
\vspace*{-5pt}
\]

\section{The observer's knowledge of the robot's plan}
\label{section:divulgedplan}

Above, we hinted that observers may differ depending on the prior knowledge that
has been revealed to them;  next we bring this idea into sharper focus.  The
information associated with an observer is contained in a pair $(I,D)$: the
I-state graph $I$ that acts as a filter, succinctly tracking state from a stream
of inputs, and knowledge of robot's plan in the form of a p-graph $D$.  These
two elements, through Definition~\ref{def:estimatedworldstates}, allow the
observer to form a correspondence with the external world $W$.  The I-state
graph $I$ induces $\veq{}{I}{}$ over its set of executions and hence over the
joint-executions with the world, or, more precisely, the image of those through
$h$. \gobble{(Recall that the coarseness of $h$ limits the fidelity of the observer,
see Figure~\ref{suppl:fig:infocollapse}.)} By comparing the fineness of the
relations induced by two I-state graphs, one obtains a sense of the relative
coarseness of the two I-state graphs.  As the present paper describes methods
motivated by applications to robotic privacy, we model the most capable
adversary, taking the \emph{finest observer}, that is, one whose equivalence
classes are as small as possible.

\vspace*{-2pt}
\begin{definition}[finest observer]
Given world graph $W$ and the divulged plan $D$, an I-state graph $\fine{I}$ is
a \emph{finest observer} if for any I-state graph $I$, we have $\forall s\in
\Language{W}$, $\compatablew{\fine{I},D}{h(s)}\subseteq \compatablew{I,
D}{h(s)}$.
\end{definition}
\vspace*{-2pt}

\begin{restatable}[]{lemma}{finestobserver}
\label{lemma:finestobserver}
$h\langle W\rangle$ is a finest observer.
\end{restatable}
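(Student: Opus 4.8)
The plan is to verify two things about $h\langle W\rangle$: that it is an I-state graph at all, and that among all I-state graphs it yields the smallest estimate of the world. For the first point, applying the label map $h$ edgewise turns every edge $v\xrightarrow{E}v'$ of $W$ into $v\xrightarrow{h(E)}v'$, so the vertex set, the observation/action partition, and the alternation of edge types are untouched, while the edge labels now lie in the image space $X$; a routine induction on the length of an execution then gives $\Language{h\langle W\rangle}=h[\Language{W}]$, and in particular $\Language{h\langle W\rangle}\supseteq h[\Language{W}]$, exactly as the definition of an I-state graph requires. Hence $h\langle W\rangle$ is a legitimate choice of $I$.

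For the main inequality I would fix an arbitrary I-state graph $I$ and an execution $s\in\Language{W}$, abbreviate $\fine{B}=\reachedv{h\langle W\rangle}{h(s)}$ and $B=\reachedv{I}{h(s)}$, and take any $w\in\compatablew{h\langle W\rangle,D}{\fine{B}}$. By Definition~\ref{def:estimatedworldstates} there is an execution $r\in\exactreachings{\inv{h}\langle h\langle W\rangle\rangle}{\fine{B}}\cap\Language{D}\cap\reachings{W}{w}$. Using the two identities recorded just after the I-state graph definition---namely $\inv{h}[\exactreachings{G}{\cdot}]=\exactreachings{\inv{h}\langle G\rangle}{\cdot}$ and $[r_i]_G=\exactreachings{G}{\reachedv{G}{r_i}}$---the first membership is equivalent to $\reachedv{h\langle W\rangle}{h(r)}=\reachedv{h\langle W\rangle}{h(s)}$, i.e.\ $\veq{h(r)}{h\langle W\rangle}{h(s)}$. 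Since $h\langle W\rangle$ carries exactly the vertices of $W$ and a string and its $h$-image trace along the same edges, one gets $\reachedv{h\langle W\rangle}{t}=\bigcup_{u\in\Language{W},\,h(u)=t}\reachedv{W}{u}$ for every $t\in h[\Language{W}]$, where each $\reachedv{W}{u}$ is a singleton because $W$ is in state-determined form. In particular, from $\reachedv{W}{r}=\{w\}$ and the reach-set equality above, there is some $u\in\Language{W}$ with $h(u)=h(s)$ and $\reachedv{W}{u}=\{w\}$. At this point the goal is reduced to exhibiting an execution $r'\in\Language{D}\cap\reachings{W}{w}$ with $\reachedv{I}{h(r')}=\reachedv{I}{h(s)}$, since such an $r'$ is precisely the certificate Definition~\ref{def:estimatedworldstates} demands in order to place $w\in\compatablew{I,D}{B}$.

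I expect this last reduction to be the main obstacle. The two natural candidates are $r'=r$ (already in $\Language{D}\cap\reachings{W}{w}$, so one would only have to upgrade $\veq{h(r)}{h\langle W\rangle}{h(s)}$ to $\veq{h(r)}{I}{h(s)}$) and $r'=u$ (already satisfying $h(u)=h(s)$ and $\reachedv{W}{u}=\{w\}$, so one would only have to show $u\in\Language{D}$); in either case the heart of the matter is to argue that the partition $h\langle W\rangle$ induces on $h[\Language{W}]$ is no coarser than the one induced by $I$, or, more robustly, that whatever additional resolving power a generic $I$ has is washed out again once one intersects with $\reachings{W}{w}$ and $\Language{D}$. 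I would attack this using the defining inclusion $\Language{I}\supseteq h[\Language{W}]=\Language{h\langle W\rangle}$ together with the identity $I=h\of\inv{h}\langle I\rangle$, and I would be careful to track exactly which executions the quantifier ``$\forall s\in\Language{W}$'' in the definition of a finest observer is effectively applied to when the inclusion is used downstream. Once $\reachedv{I}{h(r')}=\reachedv{I}{h(s)}$ is secured for a suitable $r'$, that $r'$ certifies $w\in\compatablew{I,D}{B}$, which yields $\compatablew{h\langle W\rangle,D}{\reachedv{h\langle W\rangle}{h(s)}}\subseteq\compatablew{I,D}{\reachedv{I}{h(s)}}$ for every $I$ and every $s$, i.e.\ the claim.
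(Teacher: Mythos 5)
Your write-up is more explicit than the paper's own treatment---the paper offers only a two-sentence sketch (``$h\langle W\rangle$ has sufficient internal structure to keep track of every world state''), so there is no detailed official proof to match against. You correctly check that $h\langle W\rangle$ is an admissible I-state graph with $\Language{h\langle W\rangle}=h[\Language{W}]$, and you correctly reduce the claimed containment $\compatablew{h\langle W\rangle,D}{\fine{B}}\subseteq\compatablew{I,D}{B}$ to a certificate-transfer statement: given $r\in\Language{D}\cap\reachings{W}{w}$ with $\veq{h(r)}{h\langle W\rangle}{h(s)}$, exhibit $r'\in\Language{D}\cap\reachings{W}{w}$ with $\veq{h(r')}{I}{h(s)}$. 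But that transfer is the entire content of the lemma, and your proposal stops exactly there: ``I expect this last reduction to be the main obstacle \dots\ I would attack this using \dots'' is a plan, not an argument. As submitted, the proof is incomplete at its only nontrivial point.

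Moreover, your own two candidates show the gap is not cosmetic. Taking $r'=r$ needs the partition that $I$ induces on $h[\Language{W}]$ to be coarser than the one induced by $h\langle W\rangle$, which fails for a perfectly legal I-state graph such as the tree unrolling of $h\langle W\rangle$ (whose classes are singletons); taking $r'=u$ gives $\veq{h(u)}{I}{h(s)}$ for free but offers no reason why $u\in\Language{D}$. A concrete instance where both fail and the containment itself breaks: let $W$ have a single initial action vertex with edges labelled $a,c$ to $w_1$ and $b,d$ to $w_2$, let $h(a)=h(b)=x$ and $h(c)=h(d)=z$, and let $\Language{D}=\{\epsilon,a,d\}$. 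Then $\reachedv{h\langle W\rangle}{x}=\reachedv{h\langle W\rangle}{z}=\{w_1,w_2\}$, so $\exactreachings{h\langle W\rangle}{\{w_1,w_2\}}=\{x,z\}$, $\inv{h}[\{x,z\}]\cap\Language{D}=\{a,d\}$, and $\compatablew{h\langle W\rangle,D}{\{w_1,w_2\}}=\{w_1,w_2\}$; whereas for the two-leaf tree observer $I$ one gets $\exactreachings{I}{\reachedv{I}{x}}=\{x\}$, $\inv{h}[\{x\}]\cap\Language{D}=\{a\}$, hence $\compatablew{I,D}{\reachedv{I}{x}}=\{w_1\}$. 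No choice of $r'$ can certify $w_2$ for this $I$. So the step you deferred cannot be filled in as stated: either the lemma needs an extra hypothesis (e.g.\ $D=W$, or a restriction on which p-graphs count as observers), or the quantifiers in the definition of a finest observer must be read more narrowly than they are written. You should flag this rather than leave the reduction dangling.
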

By way of a proof sketch, note that the observer only ever sees the image of the world under the label map $h$,
i.e.  $h\langle W\rangle$. The p-graph $h\langle W\rangle$ serves as a natural
I-state graph for a finest observer as it allows the observer to have
sufficient internal structure to keep track of every world state.

The second element in the observer pair is $D$, information disclosed about the
plan, and presumed to be known \emph{a priori}, to the observer.  Depending on
how much the observer knows, there are multiple possibilities here, from most-
to least-informed:
\begin{enumerate}
\renewcommand{\theenumi}{\Roman{enumi}}%
\item The observer knows the exact plan $P$ to be
executed.\label{item:exactplan} \item The plan to be executed is among a finite
collection of plans $\{P_1,P_2, \dots, P_n\}$.\label{item:setplan}
\item The observer may only know that the robot is executing \emph{some} plan,
that is, the robot is goal directed and aims to achieve some state in
$V_{\goal}$. \label{item:someplan} \item The observer knows nothing about the
robot's execution other than that it is on $W$. \label{item:wanderingrobot}
\end{enumerate}%
\vspace*{-1pt}
It turns out that a p-graph exists whose language expresses knowledge for each
of those cases (we omit the details here). Furthermore,
Section~\ref{sec:discussion_of_d} details how the observer's knowledge of the
world state ($\compatablew{I,D}{B}$) from I-states $B$ depends on
$\exactreachings{\inv{h}\langle I\rangle}{B}\cap \Language{D}\cap \Language{W}$,
a set of executions that arrive at $B$ in the I-state graph $I$. 
Because the observer uses $D$ to refine $\exactreachings{\inv{h}\langle
I\rangle}{B}$, when $\Language{P} \subsetneq \Language{D}$ the gap between the
two sets of executions represents a form of uncertainty. The ordering of the
four cases, thus, can be stated precisely in terms of language inclusion.

Now using the $D$ as appropriate for each case, one may examine whether a given
plan and disclosure policy solves the planning problem (i.e., achieves desired
goals in the world) while meeting the stipulations on information communicated.
Hence, we see that describing disclosed information via a p-graph $D$ is in fact
rather expressive. This section has also illustrated the benefits of being able
to use both interaction language and graph presentation views of the same
structure.

\vspace*{-6pt}
\section{Searching for plans and disclosure policy: the {\sc Seek} problems}
\label{section:search}
In this section, we will show how to search for a plan (together with the label
map).
\par
\medskip
    \noindent
    \fbox{
        \begin{minipage}{0.96\textwidth}
               \noindent\begin{minipage}[t]{0.6\textwidth}
                        \textbf{Problem:} {\nameseekp$\big((W, V_{\goal}), \var{x},
                            (\fine{I}, D), h, \form{\Phi}\big)$\newline 
                        \hspace*{40pt}
                        \nameseekplm$\big((W, V_{\goal}), \var{x}, (\fine{I}, \var{x}),
                        \var{\lambda}, \form{\Phi}\big)$} \\[0.05in]
               \end{minipage}      
\hfill
               \noindent\begin{minipage}[t]{0.3\textwidth}
               \fbox{
                   \begin{minipage}[t]{0.75\textwidth}
                            {\footnotesize Vars. to solve for:\vspace*{-2pt}\\
                            \null~~\,$\var{x}$ is a plan\vspace*{-2pt}\\
                            \null~~~$\var{\lambda}$ is a label map}
                    \end{minipage}
               }
               \end{minipage}
           
               \noindent\begin{minipage}[t]{\textwidth}
                    \renewcommand{\tabcolsep}{2pt}
                    \begin{tabularx}{\linewidth}{rX}
                        \emph{Input:} & A planning problem $(W, V_{\goal})$, a
                        finest observer $\fine{I}$, a divulged plan p-graph $D$,
                        information disclosure policy $h$ and information stipulation
                        $\form{\Phi}.$\\
                        \emph{Output:} & A plan ${\var{x} = (P, V_{\term})}$ and/or
                        label map ${\var{\lambda}=h}$ such that plan $(P, V_{\term})$
                        solves the problem $(W, V_{\goal})$, and $\forall s\in
                        \Language{W^{\dagger}}\cap\Language{P}, B=\reachedv{I}{h(s)}$,
                        the information stipulation $\form{\Phi}$ is always evaluated as
                        \True on $\compatablew{I,D}{B}$ (i.e.
                        $\operatorname{satfd}(B,\form{\Phi})=\True$), else \False.
                    \end{tabularx}
                \end{minipage} 
                
        \end{minipage} 
    }


Of the two versions of {\sc Seek}, the first searches for a plan, the second
for a plan and a label map, jointly.  We consider each in turn.

\vspace*{-5pt}   
\subsection{Finding a plan given some predetermined $D$}
\label{section:seekplan}

For \nameseekp\xspace, first we must consider the search space of plans.  Prior
work~\cite{saberifar18pgraph} showed that, although planning problems can have
stranger solutions than people usually contemplate, there is a core of
well-structured plans (called homomorphic solutions) that suffice to determine
solvability. As an example, there may exist plans which loop around the
environment before achieving the goal, but, they showed that in seeking plans,
one need only consider plans that short-circuit the loops. 

The situation is rather different when a plan must satisfy more than mere goal
achievement: information stipulations may actually {\sl require} a plan to loop
in order to ensure that the disclosed stream of events is appropriate for the
observer's eyes. (A concrete example appears in \fref{fig:exp-nuclear}(c).)  
The argument in~\cite{saberifar18pgraph} needs modification for our problem---a
different construction can save the result even under disclosure constraints.
This fact is key to be able to implement a solution.


In this paper, without loss of generality, we focus on finding plans in
state-determined form.  Next, we will examine the solution space closely.

\begin{definition}
\label{def:congruent}
A plan $P$ is \emph{congruent} on the world graph $W$, if and only if for every
pair of executions $s_1, s_2\in \Language{P}$ we have $\veq{s_1}{P}{s_2}
\implies \veq{s_1}{W}{s_2}$.
\end{definition}
\vspace*{-5pt}

In other words, a plan that respects the equivalence classes of the world graph
is defined as a congruent plan.  Next, our search space is narrowed further
still.

\begin{lemma}
\label{lemma:congruentplan}
Given any plan $(P, V_{\term})$, there exists a plan $(P', V'_{\term})$ that is
congruent on the world graph $W$ and $\Language{P'}=\Language{P}$.
\end{lemma}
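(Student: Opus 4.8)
The plan is to realize $P'$ as a synchronous product of $P$ with the world graph $W$, so that every state of $P'$ remembers, alongside the plan state, the world state(s) consistent with the execution traced so far; the second coordinate then forces any two executions that reach a common $P'$-state to reach a common $W$-state, which is exactly what congruence demands. The one real obstacle is that $\Language{P}$ need not be a sublanguage of $\Language{W}$ (a p-graph can carry events that $W$ never produces), so the plain product $P\pgprod W$ would in general shrink the language. I would sidestep this by completing the world first: let $\hat{W}$ be $W$ augmented with a self-looping action-sink and a self-looping observation-sink and, at every state, one further edge into the appropriate sink for each event---over an event space at least as large as $U(P)\cup Y(P)$---not already borne there. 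Then $\Language{\hat{W}}$ is the set of all event sequences of the appropriate action/observation phase, so in particular $\Language{\hat{W}}\supseteq\Language{P}$ (assuming, as throughout, that the initial states of $P$ and $W$ have matching type), and $\hat{W}$ remains in state-determined form because $W$ is (the standing assumption of Section~\ref{sec:preliminary}) and the added edges are deterministic. Put $P'\defeq P\pgprod\hat{W}$ with $V'_{\term}\defeq\{(v,v')\in V(P')\mid v\in V_{\term}\}$; if one insists on a state-determined presentation, replace $P'$ by $\sde{P'}$, which by the remark recorded earlier alters neither the language nor, as noted below, the congruence property.

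Language preservation is then immediate from the definition of the tensor product: $\Language{P'}=\Language{P}\cap\Language{\hat{W}}=\Language{P}$. For congruence, note that a sequence traces on a product graph exactly when it traces on each factor and $V_0(P')=V_0(P)\times V_0(\hat{W})$, so $\reachedv{P'}{s}=\reachedv{P}{s}\times\reachedv{\hat{W}}{s}$ (Cartesian product) for every $s\in\Language{P'}$, and moreover $\reachedv{\hat{W}}{s}$ is always a singleton: it equals $\reachedv{W}{s}$ (a single world state) when $s\in\Language{W}$, and it is a sink state when $s\in\Language{P}\setminus\Language{W}$, in which case $\reachedv{W}{s}=\emptyset$. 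Now suppose $s_1,s_2\in\Language{P'}$ with $\reachedv{P'}{s_1}=\reachedv{P'}{s_2}$. Both $\reachedv{P}{s_i}$ are nonempty, so comparing second coordinates of the two Cartesian products yields $\reachedv{\hat{W}}{s_1}=\reachedv{\hat{W}}{s_2}$; since a sink state is never a real world state, $s_1$ and $s_2$ lie on the same side of the $\Language{W}$ boundary. If both are in $\Language{W}$, their shared second coordinate is the common world state and hence $\reachedv{W}{s_1}=\reachedv{W}{s_2}$; if both are outside $\Language{W}$, then $\reachedv{W}{s_1}=\emptyset=\reachedv{W}{s_2}$. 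In either case $\reachedv{W}{s_1}=\reachedv{W}{s_2}$, i.e. $\veq{s_1}{W}{s_2}$, which establishes congruence. (The $\sde{\cdot}$ step preserves this, because two executions reach the same state of $\sde{P'}$ iff they reach the same set of states of $P'$, i.e. iff $\veq{s_1}{P'}{s_2}$.)

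The step I expect to require the most care is the language bookkeeping just flagged: arranging the completion $\hat{W}$ so that $\Language{P'}$ equals $\Language{P}$ exactly, while $\hat{W}$ stays a well-formed, state-determined p-graph, and confirming that the extra ``off-$W$'' executions this introduces cannot break congruence---which they cannot, precisely because each reaches $\emptyset$ in $W$ and is quarantined by the sink occupying its second coordinate. The remaining obligations---that $(P',V'_{\term})$ is a plan, that reached-vertex sets of a product factor as a Cartesian product, and the quoted behaviour of $\sde{\cdot}$---are routine unwinding of the Section~\ref{sec:preliminary} definitions. One may additionally verify, though it is not needed here, that $(P',V'_{\term})$ solves $(W,V_{\goal})$ whenever $(P,V_{\term})$ does, since each of the four conditions of Definition~\ref{def:solves} transfers along the projection onto $P$.
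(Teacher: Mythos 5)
Your proof is correct, but it takes a genuinely different route from the paper's. The paper's proof unrolls $P$ into a tree by BFS, creating one fresh vertex per outgoing label at each expansion step; in the resulting $P'$ no two distinct executions reach the same vertex, so the implication defining congruence holds vacuously, and the language is preserved because no branches are added or removed. You instead form the product $P\pgprod\hat{W}$ with a sink-completed copy of the world, so each state of $P'$ carries the world state (or a sink) in its second coordinate; congruence then holds non-vacuously because $\veq{s_1}{P'}{s_2}$ forces agreement in that coordinate, hence $\reachedv{W}{s_1}=\reachedv{W}{s_2}$ (a common singleton, or both empty). Each approach buys something. Yours keeps $P'$ finite, with at most $|V(P)|\cdot(|V(W)|+2)$ vertices, whereas the paper's tree is infinite whenever $P$ admits unboundedly long executions and only becomes finite after the separate $c$-bounding step of Theorem~\ref{thm:boundcongruent}. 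Conversely, the paper's construction needs no hypothesis on $W$, and—more importantly for the rest of Section~\ref{section:seekplan}—it delivers the congruent plan \emph{in tree form}, which is exactly what the text later leans on when it restricts the search space to bounded-depth trees ``since any plan has a counterpart that is congruent and in the form of a tree''; your product-shaped $P'$ proves the lemma as stated but does not supply that stronger structural conclusion. Two small points to tidy: a literally ``self-looping'' sink would violate bipartiteness, so say explicitly that the two sinks form an absorbing two-cycle (action-sink edges lead to the observation-sink and vice versa); and your singleton claim for $\reachedv{\hat{W}}{s}$ leans on the standing assumption that $W$ is state-determined—worth noting that the argument survives without it, since no edge leaves the sinks back into $V(W)$ and therefore $\reachedv{\hat{W}}{s}\cap V(W)=\reachedv{W}{s}$, which already gives $\veq{s_1}{W}{s_2}$ from equality of the second coordinates.
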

\begin{proof}
We give a construction from $P$ of $P'$ as a tree, and show that it meets the
conditions.
To construct $P'$, perform a BFS on $P$. Starting from $V_0(P)$, build a
starting vertex $v_0$ in $P'$, keep a correspondence between it and $V_0(P)$.
Mark $v_0$ as unexpanded. Now, for every unexpanded vertex $v$ in $P'$, mark the
set of all outgoing labels for its corresponding vertices in $P$ as $L_v$,
create a new vertex $v'$ in $P'$ for each label $l\in L_v$, build an edge from
$v$ to $v'$ with label $l$ in $P'$, and mark it as expanded. Repeat this process
until all vertices in $P'$ have been expanded. Mark the vertices corresponding
to vertices in $V_{\term}$ as $V'_{\term}$. In the new plan $(P', V'_{\term})$,
no two executions reach the same vertex. That is, $\forall s_1, s_2\in
\Language{P'}, \nveq{s_1}{P'}{s_2}$. Hence, $P'$ is congruent on $W$.  In
addition, since no new executions are introduced and no executions in $P$ are
eliminated during the construction of $P'$, we have
$\Language{P'}=\Language{P}$. \qed\end{proof}

\begin{restatable}[]{theorem}{boundcongruent}
\label{thm:boundcongruent}
For problem \nameseekp$\big((W, V_{\goal}), \var{x}, (I, D), h, \form{\Phi}\big)$, if
there exists a solution $(P, V_{\term})$, then there exists a solution $(P',
V'_{\term})$ that is both $c$-bounded and congruent on $W$, where $c=|V(W)|
\cdot |V(D)|\cdot |V(I)|$.
\end{restatable}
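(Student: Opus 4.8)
The plan is to start from an arbitrary solution $(P,V_{\term})$ and apply Lemma~\ref{lemma:congruentplan} to obtain a congruent plan $(P_1,V^{(1)}_{\term})$ with $\Language{P_1}=\Language{P}$. Since the language is unchanged, and since "solves'' (Definition~\ref{def:solves}) and satisfaction of $\form{\Phi}$ are both stated purely in terms of $\Language{P}\cap\Language{W}$ and the induced reached-vertex sets in $W$ and $I$, the plan $P_1$ is still a solution to \nameseekp. So it remains only to bound the length of executions, i.e., to collapse $P_1$ to an equivalent $c$-bounded congruent plan.

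The key step is a pumping/short-circuiting argument on the tree $P_1$. The natural "state'' of a joint execution $s$ for the purposes of the problem is the triple $\bigl(\reachedv{W}{s},\,\reachedv{D}{h(s)}\,\text{or}\,\reachedv{D}{s},\,\reachedv{I}{h(s)}\bigr)$; since $W$, $D$, and $I$ are fixed p-graphs, there are at most $|V(W)|\cdot|V(D)|\cdot|V(I)|=c$ distinct such triples among executions of $W$ (note $W$ is state-determined, so $|\reachedv{W}{s}|=1$; one would want $D$ and $I$ in state-determined form as well, or pass to $\sde{D}$, $\sde{I}$, which leaves languages unchanged). First I would argue that if some execution $s\in\Language{P_1}\cap\Language{W}$ has length $\ge c$, then along the sequence of prefixes of $s$ two of them, say $s=t\,m\,n$ with $|t\,m|$ longer, reach the same triple; one then excises the middle segment $m$ by redirecting, in $P_1$, the subtree hanging at the node reached by $t\,m$ to become the subtree hanging at the node reached by $t$ (equivalently, splice $P_1$ so that the continuation available after $t$ is the one formerly available after $t\,m$). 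Because the excised-and-spliced continuation is traceable on $W$, $D$, and $I$ from the same triple of states, every property needed — safety conditions 1)--2) of Definition~\ref{def:solves}, liveness 4), correctness 3) via $V_{\term}/V_{\goal}$ membership determined by the $W$-coordinate, and $\operatorname{satfd}(B,\form{\Phi})$ which depends only on $\compatablew{I,D}{B}$ with $B$ read off the $I$-coordinate — is preserved. Iterating this surgery strictly decreases the number of long executions (or a suitable size measure), so the process terminates with a plan $(P',V'_{\term})$ all of whose executions have length $<c$, hence $c$-bounded; rerunning the Lemma~\ref{lemma:congruentplan} construction (or observing the surgery keeps it a tree) keeps it congruent, and its language is still contained appropriately so it remains a solution.

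The main obstacle I expect is making the splicing operation rigorous while simultaneously preserving \emph{all four} conditions of "solves'' together with the stipulation. The subtlety flagged in the paper — that information stipulations can \emph{force} a plan to loop — means one must be careful that the excised segment is genuinely redundant: it is redundant precisely because the full triple (including the $I$- and $D$-coordinates, which is what governs what the observer infers) repeats, which is why the bound is the \emph{product} $|V(W)|\cdot|V(D)|\cdot|V(I)|$ rather than just $|V(W)|$. A secondary care point is that the observer's evaluation $\operatorname{satfd}(B,\form{\Phi})$ at an intermediate I-state $B$ must still hold after surgery: since we only delete a contiguous block of events and the set of \emph{reachable} $I$-states (with their $\compatablew{I,D}{\cdot}$ values) after surgery is a subset of those before — each realized by some unspliced execution — no new, possibly-violating $B$ is introduced, so the stipulation is inherited. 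I would also need the standard observation that shortening executions cannot break liveness, since from the repeated state the original tail witnessing reachability of $V_{\term}$ is reattached verbatim.
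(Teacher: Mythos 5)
Your proposal is correct and follows essentially the same route as the paper: first invoke Lemma~\ref{lemma:congruentplan} to obtain a language-preserving congruent plan (hence still a solution, since solvability and $\operatorname{satfd}$ depend only on the language), then shortcut executions at repeated states of the product of $W$, $D$, and (the state-determined, preimage form of) $I$, noting that the surgery visits no new observer I-states and so preserves both goal achievement and the stipulation. The only detail the paper adds that you omit is the observation that a repeated triple must occur at an \emph{action} vertex (else the environment could force unbounded executions, contradicting that the plan solves the problem), which is why the shortcut can always be realized by taking the last-visit action at the first visit.
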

\vspace*{-10pt}
\begin{proof}
Suppose \nameseekp\xspace has a solution $(P, V_{\term})$.  Then the existence
of a solution $(P', V'_{\term})$ which is congruent on $W$ is implied by
Lemma~\ref{lemma:congruentplan}. Moreover, we have 
{\sc Check}$\big((W, V_{\goal}), (P, V_{\term}), D, I, h, \form{\Phi}\big)
\implies$
{\sc Check}$\big((W, V_{\goal}),$ $(P', V'_{\term}),$ $D,$ $ I,$ $ h,
\form{\Phi}\big)$,
following from two observations:
\begin{tightenumerate}
\item[(\textit{i}.)] if $(P, V_{\term})$ solves $(W, V_{\goal})$ then the means
of construction ensures $(P', V'_{\term})$ does as well, and

\item[(\textit{ii}.)] in checking $\form{\Phi}$, the set of estimated world
states $\compatablew{I,D}{\{v\}}$ does not change for each vertex $v\in
V(\sde{I})$, since the triple graph is independent of the plan to be searched.
The set of I-states to be evaluated by $\form{\Phi}$ in $\sde{I}$ is
$\cup_{s'\in h[\Language{P}\cap \Language{W}]}\reachedv{\sde{I}}{s'}$.  Since
$\Language{P}=\Language{P'}$, the set of I-states to be evaluated is no altered
and the truth of $\form{\Phi}$ along the plan is preserved.
\end{tightenumerate}
\vspace*{4pt}

\noindent The final step is to prove that if there exists a congruent solution
$(P', V'_{\term})$, then there exits a solution $(P'', V''_{term})$ that is
$c$-bounded. First, build a product graph $T$ of $W$, $D$, and $\inv{h}\langle
SED(I)\rangle$, with vertex set $V(W)\times V(D)\times V(\inv{h}\langle
\sde{I}\rangle)$. Then trace every execution $s$ in $P'$ on $T$. If $s$ visits
the same vertex $(v^W, v^{D}, v^{\inv{h}\langle \sde{I}\rangle})$ multiple
times, then $v^W$, $v^D$, and $v^{\inv{h}\langle \sde{I}\rangle}$ have to be
action vertices, for otherwise $P'$ can loop forever and is not a solution
(since $P'$ is finite on $W$).  Next, record the action taken at the last visit
of $(v^W, v^P, v^{\inv{h}\langle \sde{I}\rangle})$ as $a_{\rm last}$.  Finally,
build a new plan $(P'', V'_{\term})$ by bypassing unnecessary transitions on
$P'$ as follows. For each vertex $(v^W, v^P, v^{\inv{h}\langle \sde{I}\rangle})$
that is visited multiple times, $P''$ takes action $a_{\rm last}$ when $(v^W,
v^P, v^{\inv{h}\langle \sde{I}\rangle})$ is first visited.  $P''$ terminates at
the goal states without violating any stipulations, since it takes a shortcut
in the executions of $P'$ but---crucially---without visiting any new observer
I-states. In addition, $P''$ will visit each vertex in $T$ at most once, and the
maximum length of its executions is $|V(W)|\times |V(D)| \times
|V(\inv{h}\langle \sde{I}\rangle)|$.  Since $P''$ preserves the structure of
$P'$ during this construction, $P''$ is also congruent. 
\qed\end{proof}


The intuition, and the underlying reason for considering congruent plans, is
that modifying the plan will not affect the stipulations if the underlying
languages are preserved.  The bound on the length then takes this further,
modifying the language by truncating long executions in the triple graph,
thereby shortcutting visits to I-states that do not affect goal achievement.

Accordingly, it suffices to look for congruent plans in the (very specific) form
of trees, since any plan has a counterpart that is congruent and in the form of
a tree (see Lemma~\ref{lemma:congruentplan} for detail).
Theorem~\ref{thm:boundcongruent} states that the depth of the tree is at most
$c=|V(W)|\cdot |V(D)| \cdot |V(\inv{h}\langle \sde{I}\rangle)|$.
Therefore, we can limit the search space to trees of a specific bounded depth. 
To search for a $c$-bounded solution, first we mark the vertex $(v^W, v^{D},
v^{\inv{h}\langle \sde{I}\rangle})$ as: (i)~a goal state if $v^{W}$ is
a goal state in the world graph;
(ii)~as satisfying $\form{\Phi}$ when all the world states appearing together
with $v^{\inv{h}\langle \sde{I}\rangle}$ together satisfy $\form{\Phi}$.  Then
we will conduct an \aNd--\Or search~\cite{pearl84heuristics} on the triple
graph:
\vspace*{-4pt}
\begin{itemize}
\item [$\bullet$]Each action vertex serves as an \Or node, and an action should
be chosen for the action vertex such that it will eventually terminate at the
goal states and all the vertices satisfy $\form{\Phi}$ along the way.
\item [$\bullet$]Each observation vertex is treated as an \aNd node, and there
exists a plan that satisfies $\form{\Phi}$ for all its outgoing observation
vertices.
\end{itemize}

\vspace*{-8pt}
\subsection{Search for plan and label map for the finest observer, disclosing
the same} 
\label{section:seekplanlabelmap}

It is not merely the joint search that makes this, the second problem more
interesting. Whereas the first has a divulged plan $D$ that is \emph{a
priori} fixed, the second uses $\var{x}$, the plan that was found, as $D$. This
latter fact makes the third substantially more difficult.

At a high level, it is not hard to see why: the definitions in the previous
section show that both $P$ and $D$ play a role in determining whether a plan
satisfies a stipulation.  Where $D$ is known and fixed beforehand (for example,
in Case~\ref{item:wanderingrobot}, $D=W$, or Case~\ref{item:someplan},
$D=P^{*}$), a solution can proceed by building a correspondence in the triple
graph ${W\pgprod D\pgprod \inv{h}\langle \sde{I}\rangle}$ and searching in this
graph for a plan.  In \nameseekplm\xspace,  however, one is interested in the
case where ${D=P}$, where the divulged plan is tight, being the robot's plan
exactly.  We cannot search in the same product graph, because we can't make the
correspondence since $D$ has yet to be discovered, being determined only after
$P$ has been found. Crucially, the feasibility of $P$ depends on $D$, that is,
on itself! Finding such a solution requires an approach capable of building
incremental correspondences from partial plans.  A key result of this paper is
that \nameseekplm\xspace is actually solvable without resorting to mere
generate-and-check.


\begin{lemma}
\label{lemma:plan_state}
Let $\compatablew{}{}$ be estimated world states for the finest observer
$h\langle W\rangle$, and let~$w$ be the world state which is observable to the
robot.  If there exists a solution for \nameseekplm, then there exists a
solution that only visits each pair $(w, \compatablew{}{})$ at most once.
\end{lemma}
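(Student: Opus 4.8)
The plan is to adapt the loop-removal construction of Theorem~\ref{thm:boundcongruent} to the joint-search setting. In that theorem we could short-circuit executions by working in the static product graph $W\pgprod D\pgprod\inv{h}\langle\sde{I}\rangle$ because $D$ was fixed in advance; here $D=\var{x}$ is the very plan being synthesised, so no such product exists. The substitute for a vertex of that product graph is the pair $(w,\compatablew{}{})$ attached to each execution $s\in\Language{P}\cap\Language{W}$: the world state $w=\reachedv{W}{s}$ (a singleton, since $W$ is in state-determined form), together with the finest observer's estimate $\compatablew{}{}=\compatablew{h\langle W\rangle,P}{B}$ at the I-states $B=\reachedv{h\langle W\rangle}{h(s)}$ that $s$ reaches. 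By Lemma~\ref{lemma:finestobserver} it suffices to take $I=h\langle W\rangle$ throughout.

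First I would use Lemma~\ref{lemma:congruentplan} to assume, without loss of generality, that the given solution $(P,V_{\term})$ is congruent on $W$ and presented as a tree, so that every node is reached by a unique execution and carries a well-defined pair $(w,\compatablew{}{})$. Then, along each root-to-leaf branch, whenever the branch visits some pair $(w,\compatablew{}{})$ more than once, record the action taken at its \emph{last} occurrence and build $P''$ by splicing: at the \emph{first} occurrence of each pair the plan already commits to that recorded continuation (a subtree of $P$). As in Theorem~\ref{thm:boundcongruent}, a repeated pair on a branch must sit at action nodes, for otherwise $P$ could be driven around a loop forever, contradicting that it is a finite, live solution; hence the splicing is well defined, $P''$ is finite and still congruent, and $P''$ visits each pair $(w,\compatablew{}{})$ at most once.

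The heart of the argument---and the step I expect to be the main obstacle---is verifying that $P''$ is still a solution of \nameseekplm when used \emph{as its own divulged plan} $D=P''$. Two properties must survive the surgery. (i)~$(P'',V''_{\term})$ still solves $(W,V_{\goal})$: this goes through exactly as in Theorem~\ref{thm:boundcongruent}, since splicing only short-circuits executions of $P$ without introducing new world behaviour, so safety, correctness, and liveness are inherited. (ii)~The stipulation $\form{\Phi}$ still holds along every execution of $P''$. This is where $D=P$ bites: $\Language{P''}\subseteq\Language{P}$ forces $\compatablew{h\langle W\rangle,P''}{B}\subseteq\compatablew{h\langle W\rangle,P}{B}$, and an arbitrary CNF $\form{\Phi}$ is not monotone under a shrinking estimate (a positive literal $\form{\mathpzc{v_i}}$ can be falsified by deleting $v_i$). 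The way out is to show the surgery does \emph{not} actually shrink the estimate at any I-state configuration $B$ that $P''$ still reaches: because each spliced pair agrees with the original one in \emph{both} coordinates, at the splice point the robot is in the same observable situation $w$ and the finest observer carries the same estimate $\compatablew{}{}$, so the kept tail and the discarded tail of $P$ are interchangeable witnesses---every certificate execution of $\Language{P}\cap\Language{W}$ placing some $v_i$ into $\compatablew{h\langle W\rangle,P}{B}$ can be re-routed through the kept tails into a certificate in $\Language{P''}\cap\Language{W}$ reaching the same $B$ and the same $v_i$. Establishing this ``certificate-transfer'' claim---in effect a simulation between $P$ and $P''$ restricted to the pairs $(w,\compatablew{}{})$, which turns the circular dependence $D=P$ into a fixed point that the surgery respects---is the crux. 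Once it is in hand, $\compatablew{h\langle W\rangle,P''}{B}=\compatablew{h\langle W\rangle,P}{B}$ on every reachable $B$, so $\operatorname{satfd}(B,\form{\Phi})$ is preserved and $P''$ is the desired solution.
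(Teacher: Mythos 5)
Your proposal follows essentially the same route as the paper's own proof: identify repeated visits to a pair $(w,\compatablew{}{})$, keep the action set taken at the \emph{last} visit, and splice so that it is taken already at the first visit, then argue the shortcut still reaches the goal and never visits additional observer I-states. The one place you go beyond the paper is in explicitly flagging that, because $D=P$, shrinking $\Language{P}$ could in principle shrink $\compatablew{h\langle W\rangle,P}{B}$ and falsify a positive literal of $\form{\Phi}$, and in sketching the certificate-transfer argument needed to rule this out---a subtlety the paper's proof compresses into the single assertion that the shortcut ``never visits more I-states,'' so your treatment is a more careful elaboration of the same argument rather than a different one.
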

\begin{proof}
Let $(P, V_{\term})$ and $h$ be a solution for \nameseekplm. Suppose $P$ visited
$(w, \compatablew{}{})$ n times. Let the set of actions taken at $i$-th visit be
$A_i$. Then we can construct a new plan $(P', V_{\term})$ which always takes
$A_n$ at $(w, \compatablew{}{})$. If $P$ does not violate the stipulations, then
$P'$ will never do since $P'$ is a shortcut of $P$ and never visits more
I-states than $P$ does. In addition, $P'$ will also terminate at the goal region
if $P$ does.  \qed
\end{proof}

\begin{restatable}[]{theorem}{planstatescope}
If there exists a solution for \nameseekplm$\big((W, V_{\goal}), \var{x}, (I_f,
\var{x}), \var{\lambda}, \form{\Phi}\big)$. then there exists a plan $P$ that
takes $(w, \compatablew{h\langle W\rangle, P}{B})$ as its plan state, where $w$
is the world state and the set $\compatablew{h\langle W\rangle, P}{B}$ consists
of the estimated world states for I-states $B$. Furthermore, if $(w,
\compatablew{h\langle W\rangle, P}{B})\in V(P)$, then $\forall w'\in
\compatablew{h\langle W\rangle, P}{B}$, $(w', \compatablew{h\langle W\rangle,
P}{B})\in V(P)$. 
\end{restatable}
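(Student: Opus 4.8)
The plan is to take the solution already furnished by Lemma~\ref{lemma:plan_state} and merely \emph{rename} its vertices by the pairs appearing in the statement. First I would normalize: by Lemma~\ref{lemma:plan_state} obtain a solution $(Q,V_{\term})$ with label map $h$ that visits each pair $(w,\compatablew{\fine{I},Q}{B})$ at most once, where $w$ ranges over the robot's observable world states and the second component denotes the estimated world states computed with respect to $Q$ itself; then, by Lemma~\ref{lemma:congruentplan}, replace $Q$ with a language-equivalent plan $P$ that is congruent on $W$ and presented as an execution tree, so that each vertex of $P$ is reached by a unique execution. Because Lemma~\ref{lemma:congruentplan} preserves $\Language{\cdot}$ and merely unrolls $Q$, the tree $P$ still visits each such pair at most once and $\compatablew{\fine{I},P}{\cdot}=\compatablew{\fine{I},Q}{\cdot}$. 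Throughout, $\fine{I}=h\langle W\rangle$ is a finest observer (Lemma~\ref{lemma:finestobserver}), and since $W$ is in state-determined form every execution of $W$ reaches exactly one world state.

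Next I would perform the renaming. For each vertex $v$ of $P$ reached by some joint-execution of $P$ and $W$ --- the other vertices may be pruned with no effect on solvability or on the stipulations --- let $s_v\in\Language{P}\cap\Language{W}$ be the unique execution reaching it, put $B_v\defeq\reachedv{\fine{I}}{h(s_v)}$ and $\{w_v\}\defeq\reachedv{W}{s_v}$, and rename $v$ to $\bigl(w_v,\compatablew{\fine{I},P}{B_v}\bigr)$. The ``visited at most once'' property says exactly that $v\mapsto\bigl(w_v,\compatablew{\fine{I},P}{B_v}\bigr)$ is injective, so this renaming is purely cosmetic: it leaves the graph structure, the language, and the termination region intact, and the result is still a solution of \nameseekplm\xspace whose vertices are pairs $\bigl(w,\compatablew{\fine{I},P}{B}\bigr)$ of a world state and a set of estimated world states. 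This establishes the first assertion.

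For the closure property, suppose $(w,\mathcal{W})\in V(P)$ with $\mathcal{W}=\compatablew{\fine{I},P}{B}$, witnessed by an execution $s$ with $B=\reachedv{\fine{I}}{h(s)}$, and fix an arbitrary $w'\in\mathcal{W}$. Since the divulged plan in \nameseekplm\xspace is the plan $P$ itself, Definition~\ref{def:estimatedworldstates} supplies an execution $s'\in\exactreachings{\inv{h}\langle\fine{I}\rangle}{B}\cap\Language{P}\cap\reachings{W}{w'}$. By the identity $\inv{h}[\exactreachings{\fine{I}}{B}]=\exactreachings{\inv{h}\langle\fine{I}\rangle}{B}$ recorded just after the definition of an I-state graph, membership in the first set gives $h(s')\in\exactreachings{\fine{I}}{B}$, i.e., $\reachedv{\fine{I}}{h(s')}=B$; membership in $\reachings{W}{w'}$ with $W$ state-determined gives $\reachedv{W}{s'}=\{w'\}$; and $s'\in\Language{P}$ means $s'$ reaches the unique vertex of $P$ whose renamed label is $\bigl(w',\compatablew{\fine{I},P}{\reachedv{\fine{I}}{h(s')}}\bigr)=\bigl(w',\compatablew{\fine{I},P}{B}\bigr)=(w',\mathcal{W})$. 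Hence $(w',\mathcal{W})\in V(P)$, as required.

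The step I expect to be the main obstacle is the consistency of the estimated-world-state sets under the normalization: here, unlike in \nameseekp\xspace, the divulged plan $D$ coincides with the very plan $P$ being constructed, so $\compatablew{\fine{I},P}{\cdot}$ moves as $P$ is reshaped. One must therefore be careful that Lemma~\ref{lemma:plan_state} is read with its pairs taken relative to the plan it outputs, and that the subsequent passage to an execution tree leaves these sets untouched --- which it does, since Lemma~\ref{lemma:congruentplan} is language-preserving and $\compatablew{\fine{I},D}{B}$ depends on $D$ only through $\Language{D}$. Once this self-reference is pinned down, the rest is routine bookkeeping with $\reachedv{}{}$, $\reachings{}{}$, $\exactreachings{}{}$ and the state-determinedness of $W$.
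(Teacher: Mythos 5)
Your proposal is correct and takes essentially the same route as the paper: Lemma~\ref{lemma:plan_state} is what licenses taking the pairs $(w,\compatablew{h\langle W\rangle, P}{B})$ as plan states, and the closure property is obtained exactly as in the paper by extracting a witness execution for $w'$ from Definition~\ref{def:estimatedworldstates} and tracing it to the vertex $(w',\compatablew{h\langle W\rangle, P}{B})$. You spell out the normalization step in more detail than the paper's one-line appeal to Lemma~\ref{lemma:plan_state}; the only quibble is that after unrolling to a tree the vertex-to-pair map need not be injective (distinct executions may reach the same pair), so your ``renaming'' is really a quotient identifying such vertices, which is harmless here precisely because Lemma~\ref{lemma:plan_state} fixes the behaviour at each pair.
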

\vspace*{-8pt}
\begin{proof}
Lemma~\ref{lemma:plan_state} shows that we can treat $(w, \compatablew{h\langle
W\rangle, P}{B})$ as the plan state for the plan to be searched for.

Since $w'\in \compatablew{h\langle W\rangle, P}{B}$, we have $\exists s\in
\reachings{W}{w'}\cap \Language{P}\cap\inv{h}[\exactreachings{h\langle
W\rangle}{B}]$. Since $s\in \Language{P}$, $s$ reaches $w$ and $h(s)$ reaches
$B$, we have $s$ reaches the tuple $(w', \compatablew{h\langle W\rangle,
P}{B})$. Hence, $(w', \compatablew{h\langle W\rangle, P}{B})\in V(P)$.\qed
\end{proof}

%
\noindent In searching for $(P, V_{\term})$, for any action state ${v_p =
(w,\compatablew{h\langle W\rangle, P}{B})}$, we determine: 
\begin{description}
\item [\quad$w\in V_{\goal}:$] We must decide whether $v_p \in V_{\term}$ holds
or not;
\item [\quad$w\not\in V_{\goal}:$] We must choose the set of nonempty actions to
be taken at $v_p$. It has to be a set of actions, since these chosen actions are
not only aiming for the goal but also obfuscating each other under the label
map. 

 
\vspace*{-3pt}
\end{description}
\vspace*{-5pt}
A state ${v_p=(w, \compatablew{h\langle W\rangle, P}{B})}$ is a terminating
state in the plan when $\compatablew{h\langle W\rangle, P}{B}\subseteq
V_{\goal}$.

With action choices for each plan state $(w, \compatablew{h\langle W\rangle,
P}{B})$ and label map $h$, we are able to maintain transitions of the estimated
world states for $B'$ after observing the image~$x$. Now, if $(w,
\compatablew{h\langle W\rangle, P}{B})$ is an action state, let the set of
actions taken at $w$ be $A_w$. Then the label map $h$ partitions the actions in
$\cup_{w\in \compatablew{h\langle W\rangle, P}{B}} A_w$ into groups, each of
which shares the same image. The estimated worlds states for $B'$ transition in
terms of groups 
\vspace*{-6pt}
{\small
\begin{equation*}
\begin{split}
\compatablew{h\langle W\rangle, P}{B'}=\left\{w'\in V(W)\middle| (w,
\compatablew{h\langle W\rangle, P}{B}\right.)&\not\in V_{\term}, w\in
\compatablew{h\langle W\rangle, P}{B},\\[-7pt] &\left.\exists a\in A_w, h(a)=x,
\trto{W}{w}{a}{w'}\right\}.
\end{split}
\end{equation*}
\vspace*{-6pt}
}

Conversely, if $(w, \compatablew{h\langle W\rangle, P}{B})$ is an observation
state, let the observations available at $w$ be $O_w$. Then $h$ also partitions
the observations in $\cup_{(w, \compatablew{h\langle W\rangle, P}{B})\not\in
V_{\term}} O_w$ and estimated world states for $B'$ transition as
\vspace*{-2pt}
{\small
\begin{equation*}
\begin{split}
 \compatablew{h\langle W\rangle, P}{B'}=\left\{w'\in V(W)\middle| (w,
 \compatablew{h\langle W\rangle, P}{B}\right.)&\not\in V_{\term}, w\in
 \compatablew{h\langle W\rangle, P}{B},\\[-7pt] &\left.\exists o'\in O_w,
 h(o')=x, \trto{W}{w}{o}{w'}\right\}.
\end{split}
\end{equation*}
\vspace*{-14pt}
}

Instead of searching for the label map over the set of all actions and
observations in $W$, we will first seek a partial label map for all observations
or chosen actions for world states in $\compatablew{h\langle W\rangle, P}{B}$,
and then incrementally consolidate them. Each partial label map is a partition
of the events, making it easy to check whether two partial maps conflict when
they are consolidated. 
If two partial partitions disagree on a value, we backtrack in the search to try
another partition label map.  Putting it all together as detailed in
\fref{fig:searchtree}, we can build a type of \aNd--\Or search tree to
incorporate these choices.

\begin{SCfigure}
\centering
\includegraphics[scale=0.28]{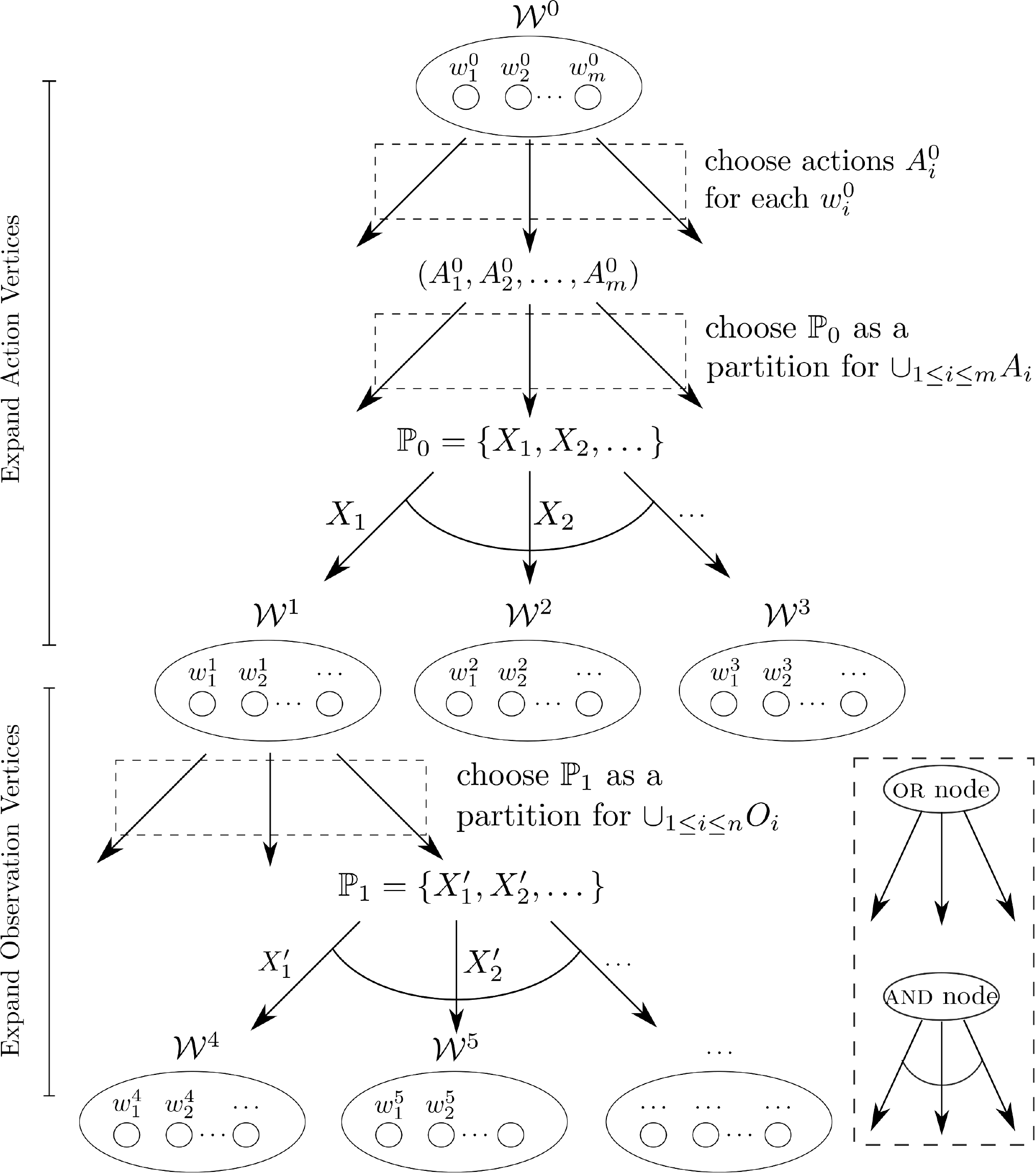}\vspace*{-10pt}
\caption{Solving the \nameseekplm\xspace problem via generalized \aNd--\Or search.%
\vspace{2pt}\newline\small
{
\null\quad For a set of actions comprising a vertex $\compatablew{0}{}$ two
tiers of \Or nodes are generated. The first is over subsets $(A^{0}_1,
A^{0}_2,\dots, A^{0}_m)$, being possible actions to the take; the second chooses
specific partitions of values $\mathbb{P}_i=\{X_1, X_2, \dots\}$, (i.e.,
partial label maps).  A given partition is expanded as an \aNd node with each
outgoing edge bearing a group of events sharing the same image under the partial
label map. \newline
\null\quad Observation vertices $\compatablew{1}{}$ are expanded in a similar
way, but are simpler since we forgo the step involving choosing actions.\newline
}
%
\label{fig:searchtree}
}
\end{SCfigure}



If there exists a plan and label map then for each $\compatablew{}{}$ in the
tree, there exists an action choice under which there exists a safe partition,
such that there exists a plan for all of its children. 
\gobble{The pesudocode to search
the plan by expanding the action node and observation node in the \aNd-\Or
search tree is shown in Algorithm~\ref{alg:actexpand} and \ref{alg:obsexpand}.}

Let the number of actions and observations in $W$ be $|Y|$ and $|U|$, and the
number of vertices be $|V|$.  There are $2^{|U||V|}$ action choices to consider,
in the worst case, for all the world states in $\compatablew{}{}$.
The total number of partitions is a Bell number $B_{|U|}$, where
$B_{n+1}=\sum^{n}_{k=0} C^{k}_n B_k$ and $B_0=1$. For each partition, the number
of groups we must consider is $|U|$. To expand an action vertex in the search
tree, the computation complexity is $2^{|U||V|}|U|B_{|U|}$. Similarly, the
complexity to expand an observation vertex is $|Y|B_{|Y|}$. If the depth of the
tree is $d$, then the computational complexity is $O(2^{{d|U||V|}})$.

\vspace*{-6pt}
\section{Experimental results}
\label{sec:experiments}
\vspace*{-4pt}

We implemented all the algorithms in this paper, the mainly using Python. The
problem \nameseekp\xspace was implemented with both the algorithm we propose and via
specification in computation tree logic (CTL) (and then utilizing the {\tt
nuXmv}  model-checker). All executions in this section used
a OSX laptop with a 2.4 GHz Intel Core i5 processor.

To experiment we constructed a ${3\times 4}$ grid for the nuclear inspection
scenario of \fref{fig:nuclear}.  Including the differing facility types and
radioactivity status, the world graph is a p-graph with $96$ vertices before
state-determined expansion ($154$ vertices for the state-determined form).  The
robot can move left, right, up, down one block at a time. After the robot's
movement, it receives $5$ possible observations: pebble bed facility or not
(only when located at the blue star), radioactivity high or low when located at
one of the `{\bf ?}' cells, and cell is an exit.  But the observer only knows
the image of the actions and observations under a label map. The stipulation
requires that the observer should learn the radioactivity strength, but should
never know the facility type.
\begin{figure}
\centering
\begin{subfigure}[c]{0.37\linewidth}
 \includegraphics[scale=0.4]{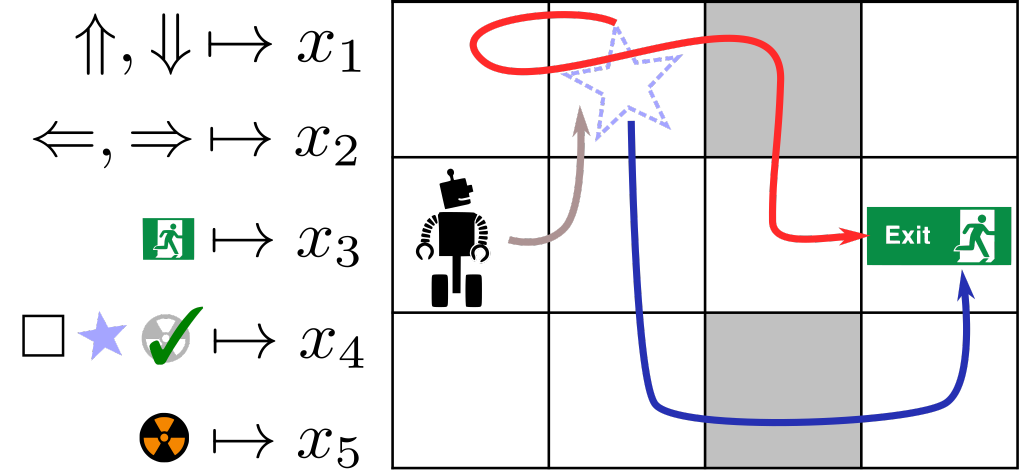}
 \caption{\label{fig:exp-nuclear-findplan}}
\end{subfigure}
\begin{subfigure}[c]{0.17\linewidth}
\includegraphics[scale=0.23]{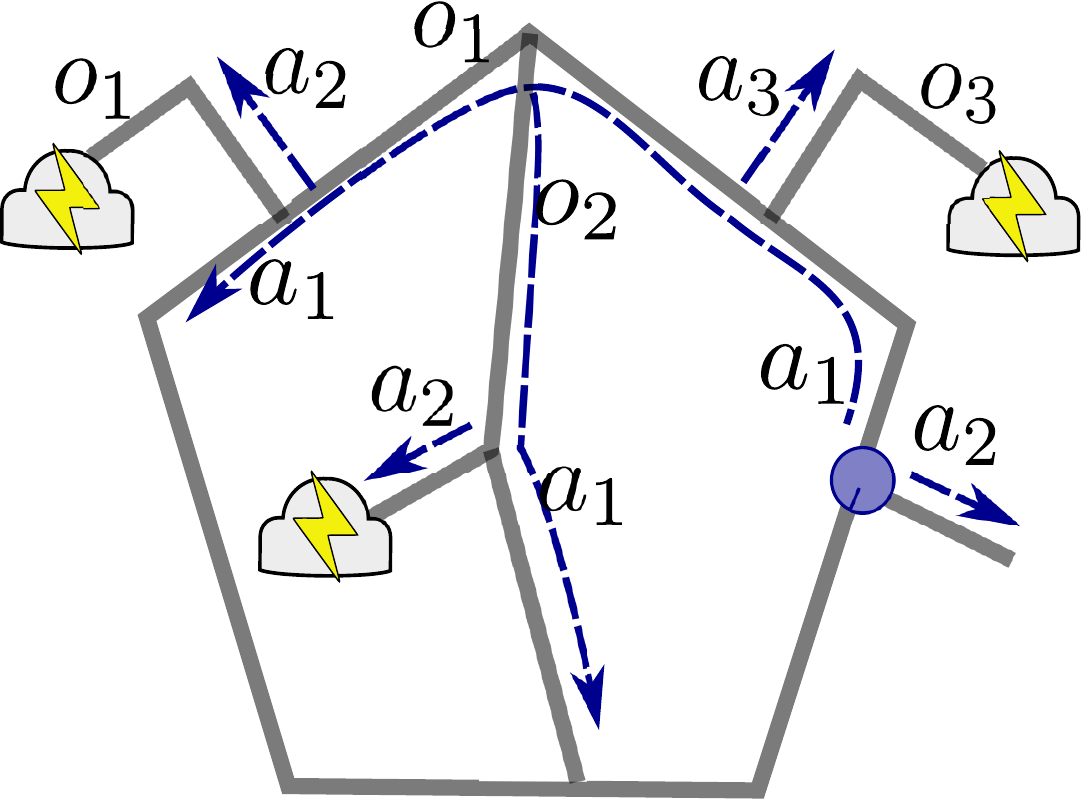}
\caption{\label{fig:exp-findplanlabelmap}}
\end{subfigure}
\caption{The scenario and results for \nameseekp\xspace and \nameseekplm\xspace problem:
(a) shows the plan found in the nuclear inspection scenario, when the observer
knows nothing about robot's plan (The robot traces the gray arrow, then the blue
one if blue light is seen, the red one otherwise.) (b) shows the pentagonal
world in \nameseekplm, where the robot moves along the gray lines.
\label{fig:exp-nuclear}}
\end{figure}
\vspace*{-20pt}
%

Firstly, we {\sc seek} the plan in the nuclear inspection scenario with a label
map shown in \fref{fig:exp-nuclear-findplan}. A plan can be found (with the
world graph disclosed, $D=W$). It takes $11$ seconds for the \aNd--\Or search
and $24$ seconds for the CTL-based implementation to find their solutions. The
CTL solver takes longer, but it prioritizes finding the plan of shortest length
first.
The plan found by CTL is shown in \fref{fig:exp-nuclear-findplan}. As the plan
found by \aNd--\Or search is lengthy, we omit it.


Since, for the nuclear inspection scenario, \nameseekplm\xspace doesn't return
any result within reasonable time we opted to examine a smaller problem.  Here a
robot moves in the pentagonal world shown in \fref{fig:exp-findplanlabelmap}.
The robot can either decide to loop in the world ($a_1$) or exit the loop at
some point ($a_2$ or $a_3$).  We wish to find a plan and label map pair so that
the robot can reach some charging station. The observer should not be able to
distinguish the robot's position when at either of the top two charging
locations.
\nameseekplm\xspace gives a plan which moves forward $6$ times and then exits at
the next time step. Additionally, to disguise the actions and observations after
the exit, it maps $h(a_2)=h(a_3)$ and $h(o_1)=h(o_3)$. Note that in this
problem, the robot reaches a goal, without considering the stipulations, by
taking the exit at the next time step. The stipulations force the robot to
navigate at least one loop in the world to conflate state for the sake of the
observer.

\vspace*{-8pt}
\section{Conclusion}
\label{sec:conclusion}
\vspace*{-6pt}

This paper continues a line of work on planning with constraints imposed on
knowledge- or belief-states. Our contribution is a substantial generalization of
prior models, though, as we see in the section reporting experiments, with grim
implications for computational requirements.  Future work might consider
techniques that incorporate costs, informed methods (with appropriate
heuristics), and other ways to solve certain instances quickly.

%

\vspace*{-6pt}
\bibliographystyle{IEEEtran} 
\bibliography{star_paper}

\begin{thebibliography}{10}
\providecommand{\url}[1]{#1}
\csname url@rmstyle\endcsname
\providecommand{\newblock}{\relax}
\providecommand{\bibinfo}[2]{#2}
\providecommand\BIBentrySTDinterwordspacing{\spaceskip=0pt\relax}
\providecommand\BIBentryALTinterwordstretchfactor{4}
\providecommand\BIBentryALTinterwordspacing{\spaceskip=\fontdimen2\font plus
\BIBentryALTinterwordstretchfactor\fontdimen3\font minus
  \fontdimen4\font\relax}
\providecommand\BIBforeignlanguage[2]{{%
\expandafter\ifx\csname l@#1\endcsname\relax
\typeout{** WARNING: IEEEtran.bst: No hyphenation pattern has been}%
\typeout{** loaded for the language `#1'. Using the pattern for}%
\typeout{** the default language instead.}%
\else
\language=\csname l@#1\endcsname
\fi
#2}}

\bibitem{privacynews}
\BIBentryALTinterwordspacing
L.~Vaas. (2017) {Privacy dust-up as Roomba maker mulls selling maps of users'
  homes}. [Online]. Available:
  \url{https://nakedsecurity.sophos.com/2017/07/26/}
\BIBentrySTDinterwordspacing

\bibitem{OKa08}
J.~M. O'Kane, ``On the value of ignorance: Balancing tracking and privacy using
  a two-bit sensor,'' in \emph{WAFR}, 2008, pp. 235--249.

\bibitem{ddf2015}
J.~M. O'Kane and D.~A. Shell, ``Automatic design of discreet discrete
  filters,'' in \emph{Proceedings of IEEE International Conference on Robotics
  and Automation}, 2015, pp. 353--360.

\bibitem{zhang18complete}
Y.~Zhang and D.~A. Shell, ``{Complete characterization of a class of
  privacy-preserving tracking problems},'' \emph{Intern. J. of Robotics
  Research---in WAFR'16 special issue}, 2018.

\bibitem{takayama11expressing}
L.~Takayama, D.~Dooley, and W.~Ju, ``{Expressing Thought: Improving Robot
  Readability with Animation Principles},'' in \emph{Proceedings of the
  International Conference on Human-Robot Interaction (HRI'11)}, Lausanne,
  Switzerland, Mar. 2011, pp. 69--76.

\bibitem{dragan17robot}
A.~D. Dragan, ``{Robot Planning with Mathematical Models of Human State and
  Action},'' {arXiv preprint arXiv:1705.04226}, 2017.

\bibitem{knepper17implicit}
R.~A. Knepper, C.~I. Mavrogiannis, J.~Proft, and C.~Liang, ``{Implicit
  Communication in a Joint Action},'' in \emph{Proceedings of the International
  Conference on Human-Robot Interaction (HRI'17)}, Vienna, Austria, Mar. 2017,
  pp. 283--292.

\bibitem{lavalle2006planning}
S.~M. LaValle, \emph{Planning algorithms}.\hskip 1em plus 0.5em minus
  0.4em\relax Cambridge University Press, 2006.

\bibitem{masters17deceptive}
P.~Masters and S.~Sardina, ``{Deceptive Path-Planning},'' in \emph{Proceedings
  of the International Joint Conference on Artificial Intelligence}, Melbourne,
  Australia, Aug. 2017, pp. 4368--4375.

\bibitem{dragan15deception}
A.~D. Dragan, R.~Holladay, and S.~S. Srinivasa, ``{Deceptive Robot Motion:
  Synthesis, Analysis and Experiments},'' \emph{Autonomous Robots}, vol.~39,
  no.~3, pp. 331--345, Oct. 2015.

\bibitem{Wu2016Obfuscator}
Y.-C. Wu, V.~Raman, S.~Lafortune, and S.~A. Seshia, ``Obfuscator synthesis for
  privacy and utility,'' in \emph{NASA Formal Methods Symposium}.\hskip 1em
  plus 0.5em minus 0.4em\relax Springer, 2016, pp. 133--149.

\bibitem{saberifar18pgraph}
F.~Z. Saberifar, S.~Ghasemlou, D.~A. Shell, and J.~M. O'Kane, ``{Toward a
  language-theoretic foundation for planning and filtering},'' \emph{Intern. J.
  of Robotics Research---in WAFR'16 special issue}, 2018.

\bibitem{pearl84heuristics}
J.~Pearl, \emph{{Heuristics: Intelligent Search Strategies for Computer Problem
  Solving}}.\hskip 1em plus 0.5em minus 0.4em\relax Addison-Wesley, 1984.

\end{thebibliography}

\end{document}